\newcommand\R{\mathbb R}
\newcommand\opt{\mathsf{opt}}
\newcommand{\N}{\mathcal{N}}
\newcommand{\eps}{\epsilon}
\newcommand{\E}{\mathbb{E}}
\renewcommand{\opt}{\textsc{OPT}}
\newcommand{\obj}{\textsc{OBJ}}
\newcounter{note}[section]
\newcommand{\tr}{\textrm{tr}}
\newcommand{\boldsigma}{\bm{\sigma}}
\newcommand{\hatboldsigma}{\hat{\bm{\sigma}}}
\newcommand{\hatsigma}{\hat{\sigma}}
\theoremstyle{plain}
\newtheorem{theorem}{Theorem}[section]
\newtheorem{lemma}[theorem]{Lemma}
\theoremstyle{definition}
\theoremstyle{remark}
\newcommand{\basic}{$\mathsf{VarAlloc}$}
\newcommand{\gena}{$\mathsf{CorrVarAlloc}$}
\newcommand{\genb}{$\mathsf{GraphVarAlloc}$}
\newcommand{\genbcorr}{$\mathsf{CorrGraphVarAlloc}$}
\newcommand{\poly}{\text{poly}}
\title{Allocating Variance to Maximize Expectation}
\author{Renato Purita Paes Leme\thanks{Google inc.},~~Cliff Stein\thanks{Google inc. and Columbia U.},~~Yifeng Teng\thanks{Google inc.},~~Pratik Worah\thanks{Google inc., pworah@google.com}
}
\begin{document}
\maketitle

\begin{abstract}%
We design efficient approximation algorithms for maximizing the expectation of the supremum of families of Gaussian random variables. In particular, let $\opt:=\max_{\sigma_1,\cdots,\sigma_n}\E\sum_{j=1}^{m}\max_{i\in S_j} X_i$, where $X_i$ are Gaussian, $S_j\subset[n]$ and $\sum_i\sigma_i^2=1$, then our theoretical results include:
\begin{itemize}
    \item We characterize the optimal variance allocation -- it concentrates on a small subset of variables as $|S_j|$ increases,
    \item A polynomial time approximation scheme (PTAS) for computing $\opt$ when $m=1$, and
    \item An $O(\log n)$ approximation algorithm for computing $\opt$ for general $m>1$.
\end{itemize}

Such expectation maximization problems occur in diverse applications, ranging from utility maximization in auctions markets to learning mixture models in quantitative genetics.
\end{abstract}


\section{Introduction}

The total accuracy of a machine learning (ML) model is constrained by several factors such as the available training data,  model architecture, and training algorithms, and while it may not be possible to improve the average model accuracy past a certain point in a given setup, one still has freedom to tune the model to trade-off accuracy in one subset of examples for accuracy in another subset. One approach is to reweigh the data~\citep{dataset-book, cortes1, cortes2}. Consider the well-known problem of predicting a protein's 3D structure, using ML algorithms to learn from labeled examples. One may want to tune the model to be more accurate for predicting the structure of cell surface proteins for designing drugs that target surface proteins~\citep{rfdiff}; but one may want to tune the model to be more accurate for predicting the structure of DNA binding enzymes~\citep{opencrispr}, for designing gene editing therapies. In principle, both can be achieved, using the same labeled training dataset, by reweighing examples appropriately. One critical question is: how much to up-weight or down-weight any given training example? 

For an ML algorithm, the accuracy of predictions in a neighborhood of the data-space may be approximated by the variance of predictions in that neighborhood. So, making predictions more accurate in a given neighborhood, by up-weighting training data-points from that neighborhood, should closely correspond to lowering the {\em local variance} in a neighborhood.\footnote{
In fact, it is easy to explicitly relate reweighing the loss function to the variance allocation problem, when training linear regression models (see Subsection~\ref{sbs:mot}).} If one can formulate the (weighted) loss function as a function of corresponding local variances, then the problem becomes one of computing optimal variance allocation to minimize the (training or validation) loss, or equivalently maximize a reward. Thus, in this paper, we focus on designing efficient algorithms for how to optimally allocate variance for a simple class of objective functions involving expectation maximization.



The organization of the paper is as follows. In Subsection~\ref{sbs:form}, we formulate the abstract optimization problems formally; in Subsection~\ref{sbs:results}, we state our main structural lemma and theorems, which correspond to the proofs of correctness of the approximation algorithm mentioned above; in Subsection~\ref{sbs:sims}, we provide figures (Figures~\ref{payoff:fig} and~\ref{fig:comp}) that show general trends in the solution as the dependency among the Gaussians increases; in Subsection~\ref{sbs:mot} we discuss applications to machine learning and auctions; in Subsection~\ref{sbs:related}, we discuss related works, especially how our results compare to prior extensive work in this area~\citep{talagrand}; and finally in Section~\ref{sec:overview} we describe an overview our theorems, and prove Lemma~\ref{lem:eps-contribution}, which contains a modified chaining argument that may be of independent interest. The appendix contains the remaining theorems and proofs.

\subsection{Problem formulation}\label{sbs:form}


We are motivated by trade-offs in the accuracy of ML models, but we defer any motivational details to Subsection~\ref{sbs:mot}, and formulate our main question as a stochastic optimization problem. We start by defining its basic version (dense and independent) and later consider correlated and sparse variants.


\noindent {\em Variance Allocation Problem:} We are given  $n$ independent Gaussian random variables $(X_1,\hdots,X_n)$ with means $\mu_1, \hdots, \mu_n$. 
Our goal is to assign values to the variances, $\sigma_1^2, \hdots, \sigma_n^2$, so as to maximize the expected value of the maximum $X_i$ subject to the constraint that the total variance is bounded, that is 
  the variance vector satisfies$\sum_{i=1}^{n}\sigma_i^2 = 1$.  We  let $\opt$ denote the optimal objective in this setting, and summarize the problem as (\basic):
\begin{eqnarray}
\opt&=&\max_{\sigma_1,...,\sigma_n}\ \E\max_{i\in [n]} X_i\\
\textrm{subject to}& &\sum_{i=1}^{n}\sigma_i^2 = 1. \label{eqn:independent-constraint}
\end{eqnarray}

\noindent {\em Correlated Variance Allocation:} The above formulation (\basic) uses independent Gaussians,  but we also consider the non-independent case, where we have covariances. We are again given means $\mu_1, \hdots, \mu_n$ and are asked to choose a covariance matrix $\Sigma$ to maximize the expected maximum of $(X_1,X_2,\cdots,X_n)\sim\N(\mu,\Sigma)$ such that the total variance is bounded $\sum_{i=1}^{n}\Sigma_{ii} = 1$. Thus we get (\gena):
\begin{eqnarray}
\opt&=&\max_{\Sigma}\ \E\max_{i\in [n]} X_i\\
\textrm{subject to}& &\sum_{i=1}^{n}\Sigma_{ii} = 1 \\ \label{eqn:correlated-constraint}
& &  \Sigma \mbox{ is PSD}
\end{eqnarray}

\noindent {\em Graph Variance Allocation:} 
We generalize (\basic) in a different way by allowing grouping among variables.
We are now given, in addition to the $n$ independent and normally distributed variables, $m$ subsets $S_j\subseteq[n]$. We denote the contribution of the set $S_j $ by: $\max_{i\in S_j}X_i$, and our objective is to allocate the variance with the same constraint \eqref{eqn:independent-constraint} and maximize the total expected contribution of the $m$ sets. More formally (\genb):

\begin{eqnarray}
\opt&=&\max_{\sigma_1,\cdots,\sigma_n}\E\sum_{j=1}^{m}\max_{i\in S_j} X_i\\
\textrm{subject to}& &\sum_{i=1}^{n}\sigma_i^2= 1.\label{eqn:multi-sets-constraint}
\end{eqnarray}

\noindent {\em Correlated Graph Variance Allocation:} 
We can also generalize (\genb) to use correlated Gaussians.
We are again given means $\mu_1, \hdots, \mu_n$ and $m$ subsets $S_j\subseteq[n]$, and want to choose a covariance matrix $\Sigma$ to maximize the sum of the contribution $\max_{i\in S_j}X_i$ of each set $S_j$ such that the total variance is bounded $\sum_{i=1}^{n}\Sigma_{ii} = 1$. More formally (\genbcorr):

\begin{eqnarray}
\opt&=&\max_{\sigma_1,\cdots,\sigma_n}\E\sum_{j=1}^{m}\max_{i\in S_j} X_i\\
& &  \Sigma \mbox{ is PSD}.
\end{eqnarray}

\noindent Note that, when $\Sigma$ is assumed diagonal:\begin{itemize}
\item \gena~is equivalent to \basic, and 
\item \genbcorr~is equivalent to \genb; 
\end{itemize} 
and when $m=1$, \genb~is equivalent to \basic.
In each case, we want to compute the optimal $\boldsigma$s or $\Sigma$ using an efficient approximation algorithm. We will use $\obj$ to denote the objective function studied in each case: $\obj=\E\max_{i\in[n]} X_i$ for \basic~and \gena, and $\obj=\E\sum_{i=1}^{m}\max_{i\in S_j} X_i$ for \genb.

\subsection{Algorithmic Results}\label{sbs:results}

We state our main technical results here.  Our algorithms for \basic \ and \gena \ will be stated as additive approximations, while for \genb, it is stated a multiplicative approximation. For \basic~and \gena, the additive PTAS is at least as strong as a multiplicative PTAS, since $\opt$ is at least $\Omega(1)$: when setting $\sigma_1=1$ and $\sigma_i=0$ for $i\geq 1$, $\E\max_i X_i=\sqrt{\frac{2}{\pi}}$ for $n\geq 2$ and $0$ for $n=1$.  



\begin{restatable}{theorem}{thmptasindependent}\label{thm:PTAS-anymean}
Given an input to \basic \  with 
 $\mu_1,\cdots,\mu_n\geq0$ and constant $\eps>0$, there exists an algorithm with running time polynomial in $n$, that computes a variance vector $(\hatsigma_1,\cdots,\hatsigma_n)$ such that $\E\max_{i\in [n]} X_i\geq \opt-\eps$.

 
\end{restatable}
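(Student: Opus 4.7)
The plan is to reduce (\basic) to an enumeration problem by combining two structural ingredients with Monte Carlo estimation. First, a \emph{small-support lemma}: some near-optimal variance vector has support of size $k=O(1/\eps^2)$, independent of $n$. Second, a \emph{discretization lemma}: on any fixed support, the $\sigma_i^2$'s can be snapped to a grid of $\poly(1/\eps)$ values with only $O(\eps)$ loss. Combined with Monte Carlo estimation of $\E\max_i X_i$ to additive accuracy $\eps/3$ using $\poly(1/\eps,\log n)$ samples, the algorithm enumerates all $S\subseteq[n]$ with $|S|\leq k$ and all grid-valued variance assignments on $S$, and returns the best. The total runtime is $n^{O(k)}\cdot\poly(n,1/\eps)$, which is polynomial in $n$ for fixed $\eps$.

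The small-support lemma is the heart of the argument and should follow from the chaining-style Lemma~\ref{lem:eps-contribution}. Fix an optimal $\sigma^*$ and partition variables dyadically by standard deviation: $L_j=\{i:\sigma_i^*\in[2^{-j-1},2^{-j}]\}$. The budget $\sum_i(\sigma_i^*)^2\leq 1$ forces $|L_j|\leq 4\cdot 4^j$, so $\log|L_j|=O(j)$. A standard Gaussian maximal inequality gives $\E\max_{i\in L_j}\sigma_i^*Z_i\leq 2^{-j}\sqrt{2\log|L_j|}=O(2^{-j}\sqrt{j})$, and summing over $j>J$ yields $O(2^{-J}\sqrt{J})$; choosing $J=O(\log(1/\eps))$ makes this at most $\eps/2$. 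Let $S=\bigcup_{j\leq J}L_j$, so $|S|=O(4^J)=O(1/\eps^2)$. Setting $\tilde\sigma_i=\sigma_i^*$ for $i\in S$ and $\tilde\sigma_i=0$ otherwise, the new objective differs from $\opt$ by at most $\E(\max_{i\notin S}\sigma_i^*Z_i)^+$, via the elementary pointwise inequality $\max(A,B)-\max(A,B')\leq(B-B')^+$ applied with $A=\max_{i\in S}X_i$, $B=\max_{i\notin S}X_i$, $B'=\max_{i\notin S}\mu_i$, together with $B-B'\leq\max_{i\notin S}\sigma_i^*Z_i$. The chaining sum then bounds the right-hand side by $O(\eps/2)$.

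The discretization lemma follows from Lipschitz continuity of $\E\max_i(\mu_i+\sigma_i Z_i)$ in each $\sigma_i$, which holds with constant $\E|Z_i|\leq 1$ by the Stein-type identity $\partial_{\sigma_i}\E\max_j X_j=\E[Z_i\mathbf{1}\{X_i=\max_j X_j\}]$. Snapping each surviving $\sigma_i^2$ to a multiple of $\eps^2/k$ and renormalizing changes the objective by $O(\eps)$. For Monte Carlo estimation we use that $\max_i X_i$ is $(\max_i\sigma_i)$-sub-Gaussian around its mean, so $O(\eps^{-2}\log n)$ samples give additive $\eps/3$ accuracy with high probability. The assumption $\mu_i\geq 0$ enters primarily through the guarantee $\opt=\Omega(1)$ already noted in the excerpt, ensuring the additive PTAS implies a multiplicative one and ruling out pathological instances where $\E\max X_i$ is dominated by a large negative baseline.

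The main obstacle I anticipate is formalizing the small-support lemma carefully: the naive bound $\E\max_i X_i\leq\max_i\mu_i+\E\max_i\sigma_i Z_i$ is loose when the means are heterogeneous, so the chaining step must respect the mean structure---discarding the low-variance tail should cost only the Gaussian fluctuation those variables contribute, not a quantity tied to $\max_{i\notin S}\mu_i$. Separating baseline means from incremental Gaussian noise is what makes the chaining argument ``modified'' relative to standard Gaussian chaining (which bounds $\E\sup_t X_t$ without regard to a baseline), and is presumably the role of Lemma~\ref{lem:eps-contribution}.
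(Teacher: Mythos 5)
Your proof follows the paper's plan exactly: use the chaining estimate (Lemma~\ref{lem:eps-contribution}) to show that some variance vector with support $\tilde O(1/\eps^2)$ is near-optimal, discretize the surviving $\sigma_i$'s via the $O(1)$-Lipschitz property of the objective (the paper's Lemma~\ref{lem:lipschitz}, your Stein-type derivative bound), and enumerate over supports and grid values. Your pointwise inequality $\max(A,B)-\max(A,B')\leq(B-B')^+$ is a cleaner packaging of the paper's step that peels $\max_{i\in G_2}\mu_i$ off the low-variance block, and the Monte Carlo estimation you add for evaluating $\E\max_i X_i$ is a detail the paper passes over by tacitly assuming the $k$-dimensional Gaussian expectation is computable, but neither constitutes a different route.
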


\begin{restatable}{theorem}{thmptascorrelated}\label{thm:PTAS-correlated}
Given an input to \gena \ and constant $\eps>0$, there exists an algorithm with running time polynomial in $n$, that computes a covariance matrix $\widehat{\Sigma}$ such that for $(X_1,\cdots,X_n)\sim\N(0,\widehat{\Sigma})$, $\E\max_{i\in [n]} X_i\geq \opt-\eps$.
\end{restatable}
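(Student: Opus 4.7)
The plan is to reduce the correlated problem to a finite discrete optimization via a low-rank approximation of $\Sigma$ combined with an $\eps$-net discretization. Writing any candidate covariance as $\Sigma = B B^\top$ for some $B \in \R^{n \times r}$, we may express the jointly Gaussian vector as $X_i = \langle b_i, Z \rangle$, where $Z \sim \N(0, I_r)$ and $b_i \in \R^r$ is the $i$-th row of $B$. The trace constraint becomes $\sum_i \|b_i\|^2 = 1$ and the objective becomes $\E \max_i \langle b_i, Z \rangle$, so the task is to optimize over configurations of $n$ vectors whose squared norms sum to $1$.

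The first step is to show that, without losing more than $\eps$ in the objective, we may restrict attention to rank $r = r(\eps) = \poly(1/\eps)$. Intuitively, the Gaussian process $\{\langle b_i, Z \rangle\}_{i\in[n]}$ is well-approximated by its projection onto the top eigendirections of $\Sigma$: directions with small eigenvalues contribute only a low-variance, highly correlated tail to the process, and I would adapt the chaining argument of Lemma~\ref{lem:eps-contribution} to show that dropping them changes $\E \max_i X_i$ by at most $\eps$. This is the \textbf{main obstacle} of the proof, since a priori $\Sigma$ can have rank up to $n$ and the tail eigenvalues need not be individually small; what one needs is a quantitative bound showing that their \emph{collective} contribution to the expected supremum is $O(\eps)$, possibly exploiting the trace constraint directly in the chaining bound.

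With $r$ bounded by $r(\eps)$, the vectors $b_i$ all lie in the unit ball of $\R^r$. I would snap each $b_i$ to its nearest point in an $\eps$-net $\mathcal T$ of this ball, of size $|\mathcal T| = (O(1/\eps))^r$. A Sudakov--Fernique comparison bounds the resulting change in $\E \max_i \langle b_i, Z \rangle$ in terms of the worst coordinate-wise rounding error, and hence by $O(\eps)$. After rounding, each $b_i$ equals one of $|\mathcal T|$ fixed directions; vectors pointing in the same direction are perfectly correlated and can be aggregated into a single effective coordinate whose squared norm equals the sum of the originals. Hence the reduced instance has at most $|\mathcal T|$ effective variables, each described by a direction in $\mathcal T$ and a non-negative weight, with the weights summing to $1$.

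Finally, I would enumerate over all such configurations on an $\eps$-net of the $(|\mathcal T| - 1)$-dimensional simplex of weights, evaluating the objective for each candidate to precision $\eps$ via Monte Carlo sampling (using $\poly(1/\eps)$ samples, which suffices since $\max_i X_i$ is a bounded, sub-Gaussian statistic under the trace constraint). Since $|\mathcal T|$ depends only on $\eps$, the total number of candidates is $2^{\poly(1/\eps)}$, and the overall running time is polynomial in $n$ for any fixed $\eps$. The best candidate yields a covariance $\widehat \Sigma$ with $\E \max_i X_i \geq \opt - O(\eps)$, and rescaling $\eps$ by a constant gives the advertised guarantee.
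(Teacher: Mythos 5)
Your factorization $\Sigma = BB^\top$ and the rank reduction via the tail eigenspace are genuinely different from the paper, and the rank-reduction idea is sound: if $r = 1/\eps^2$ then $\lambda_{r+1} \le \eps^2$, so every diagonal entry of $\Sigma_{\mathrm{tail}}$ is at most $\eps^2$ while the trace of the tail is at most $1$, and Lemma~\ref{lem:eps-contribution} then bounds the contribution of the tail process by $O(\eps\sqrt{\ln(1/\eps)})$. But this only controls the rank; it does nothing to reduce the \emph{number} of coordinates, which is where the proof breaks.

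The gap is in the snapping step. Sudakov--Fernique is a one-sided comparison inequality, not a Lipschitz bound, and the standard two-sided perturbation argument (write $b_i' = b_i + \delta_i$ and bound $\E\max_i\langle b_i',Z\rangle \le \E\max_i\langle b_i,Z\rangle + \E\max_i\langle\delta_i,Z\rangle$) gives a loss of order $\eps\sqrt{\log N}$ where $N$ is the number of vectors being rounded. With all $n$ coordinates live, that is $\eps\sqrt{\log n}$, which yields only a QPTAS — exactly the obstacle the paper emphasizes in Section~\ref{sbs:related}. Worse, rounding $n$ vectors each by $\eps$ can inflate the budget $\sum_i\|b_i\|^2$ by up to $2\eps\sum_i\|b_i\| + n\eps^2$, and by Cauchy--Schwarz this is unbounded as $n\to\infty$. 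Your proposal needs, before the net argument, a reduction to $O(1/\eps^2)$ significant coordinates, which the paper obtains by applying Lemma~\ref{lem:eps-contribution} a second time to zero out every variable with variance below $\eps^2$. Once only $k = O(1/\eps^2)$ coordinates remain, the perturbation cost becomes $\eps\sqrt{\log k} = O(\eps\sqrt{\log(1/\eps)})$, the budget blow-up is a constant, and the rank bound $r \le k$ falls out for free, making the separate spectral truncation redundant. (Your aggregation of equal-direction vectors is also not quite right: if $b_i$ and $b_j$ snap to the same net point $v$, the two variables become identical, and the effective variance is $\|v\|^2$, not $\|b_i\|^2 + \|b_j\|^2$ — but this too disappears once there are only constantly many coordinates and you enumerate directly.)

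For comparison, the paper's proof works entirely in $\Sigma$-space after the variable reduction: Lemma~\ref{lem:emd-max} reduces smoothness of the objective to Earth Mover's Distance, and Lemma~\ref{lem:emd-w2} shows an entry-wise $\ell_1$ perturbation of $\delta$ in the $k\times k$ covariance block moves EMD by only $O(k^{2.75}\delta^{1/2})$, which is Hölder-$1/2$ and forces the fine grid $\eps^{8.5}$. Your $B$-parameterization, once the missing coordinate-reduction step is inserted, would give genuine Lipschitz dependence on the factor $B$ and hence a coarser grid, which is a small but real simplification of the discretization analysis. As written, however, the proposal does not establish the PTAS.
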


\begin{restatable}{theorem}{thmmultiplesets}\label{thm:polylog-approx}
Given an input to \genb, there exists an algorithm with a running time polynomial in $n$  that computes a variance vector $(\hatsigma_1,\cdots,\hatsigma_n)$ such that
$\E\sum_{j=1}^{m}\max_{i\in S_j} X_i\geq \Omega\left(\frac{1}{\log n}\right)\opt$.
\end{restatable}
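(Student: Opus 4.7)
The plan is to use the single-set PTAS from Theorem~\ref{thm:PTAS-anymean} as a black-box subroutine and combine its outputs into a joint solution via a bucketing argument. For each $j \in [m]$, I would first invoke the PTAS to compute a near-optimal allocation $\sigma^{(j)}$ together with its value $V_j^{(1)}$ for the single-set problem on $S_j$ with the entire variance budget. Since the contribution of $S_j$ to any joint allocation is upper bounded by its single-set optimum, this yields the crude bound $\opt \leq \sum_j V_j^{(1)}$.

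I would then partition the sets into $O(\log n)$ dyadic buckets by the scale of $V_j^{(1)}$---discarding sets whose values are below a $1/\poly(n)$ fraction of $\max_j V_j^{(1)}$, which together contribute only negligibly to $\opt$---so that pigeonhole guarantees some bucket $B$ with $\sum_{j \in B} V_j^{(1)} \geq \Omega(\opt / \log n)$. For each bucket $B$, the algorithm enumerates a polynomial family of candidate allocations and returns the one achieving the largest joint objective. Natural candidates include (i) each single-set PTAS solution $\sigma^{(j)}$ for $j \in B$, (ii) single-element allocations $\sigma = e_i$, and (iii) ``spread'' allocations of the form $\sigma \propto \sum_{j \in B} \sigma^{(j)}$, renormalized to have unit $L_2$ norm.

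The main obstacle is the within-bucket analysis: showing that for the heavy bucket $B$, some candidate allocation achieves joint objective $\Omega(\opt / \log n)$. Two extremes illustrate why simple candidates suffice. When the (small) supports of the $\sigma^{(j)}$'s---guaranteed to be small by the structural concentration lemma for single-set optima---overlap heavily on common elements, a single $\sigma^{(j)}$ already serves many sets simultaneously, giving value close to $|B|\cdot v$ (with $v \approx V_j^{(1)}$ in the bucket). When the supports are nearly disjoint, the normalized spread allocation distributes the unit budget across $|B|$ sets and gives joint value $\sqrt{|B|}\cdot v$, which matches the optimum in that regime since the contribution of $B$ to $\opt$ is itself bounded by $\sqrt{|B|}\cdot v$. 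Formalizing this dichotomy---and interpolating between the two extremes to ensure that one enumerated candidate matches every possible overlap pattern up to an $O(1)$ factor---is the technical crux, likely requiring a careful covering argument that exploits the structural concentration of the $\sigma^{(j)}$'s along with positive-mean contributions.
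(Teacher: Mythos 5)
There is a genuine gap, and you have named it yourself: the within-bucket analysis is not a proof sketch but an open problem in your write-up. Your outer reduction (bucketing sets by their full-budget single-set values $V_j^{(1)}$, discarding negligible sets, and pigeonholing to a heavy bucket $B$) is fine, but it only tells you that the \emph{upper bound} $\sum_{j\in B}V_j^{(1)}$ is large; since this sum can exceed $\opt$ by a $\sqrt{m}$ factor (e.g.\ $m$ disjoint pairs), you must still exhibit a single allocation of joint value $\Omega(\opt/\log n)$, and your three candidate families (individual $\sigma^{(j)}$'s, single coordinates, and the renormalized sum $\sum_{j\in B}\sigma^{(j)}$) are only argued to work at two clean extremes. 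The hard cases are heterogeneous overlap patterns, where different variables are shared by very different numbers of sets and a set realizes its value through several variables at different scales; moreover, with nonzero means the optimal way for a set to spend a \emph{small} share of the budget is not a rescaling of its full-budget PTAS solution $\sigma^{(j)}$, so the spread candidate can be aimed at the wrong variables. Without the ``careful covering argument'' you defer to, there is no proof that some polynomial candidate list is within a constant of the best allocation for the heavy bucket, and it is not clear such a list exists.

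The paper takes a different and self-contained route that avoids this crux entirely: it buckets by \emph{variance level} rather than by per-set value. After deleting singleton sets, it reduces to zero means losing a constant factor (Lemma~\ref{lem:max0}), rounds standard deviations to powers of $2$ losing another factor $2$ (Lemma~\ref{lem:var2approx}), and discards variances below $1/n^2$ using Lemma~\ref{lem:eps-contribution}; this leaves $O(\log n)$ variance levels, one of which carries an $\Omega(1/\log n)$ fraction of $\opt$. For a fixed level $4^{-k}$ the problem becomes: choose at most $4^k$ variables to receive variance $4^{-k}$ so as to maximize the objective, which is monotone submodular (Lemma~\ref{lem:submodularset}) and hence solved to a $1-1/e$ factor by the greedy algorithm of Nemhauser et al.; trying all $k\le\log_2 n$ and keeping the best gives the theorem. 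If you want to salvage your approach, you would essentially need to rediscover a statement of this kind (that restricting to a single variance scale and optimizing the support is enough), at which point the submodular-greedy step is the natural way to make the within-bucket search rigorous.
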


\begin{restatable}{theorem}{thmmultiplesetscorr}\label{thm:polylog-approx-corr}
Given an input to \genbcorr, there exists an algorithm with running time polynomial in $n$  that computes a covariance matrix $\widehat{\Sigma}$ such that for $(X_1,\cdots,X_n)\sim\N(0,\widehat{\Sigma})$,
$\E\sum_{j=1}^{m}\max_{i\in S_j} X_i\geq \Omega\left(\frac{1}{\log n}\right)\opt$.
\end{restatable}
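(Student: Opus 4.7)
The plan is to reduce \genbcorr to \genb by a Sudakov--Fernique comparison, and then to invoke Theorem~\ref{thm:polylog-approx}. Specifically, I will show that the optimal correlated value is at most a constant factor times the optimal diagonal value $\opt_{\genb}$, so that any $\Omega(1/\log n)$-approximation for \genb yields an $\Omega(1/\log n)$-approximation for \genbcorr.

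Let $\Sigma^*$ be an optimal covariance for the given \genbcorr instance with value $\opt$, and set $\sigma_i^2:=\Sigma^*_{ii}$, so $\sum_i\sigma_i^2=1$. Let $X\sim\N(0,\Sigma^*)$, and let $Y$ be the independent centered Gaussian vector with $Y_i\sim\N(0,2\sigma_i^2)$. Because $\Sigma^*$ is PSD, $|\Sigma^*_{ij}|\leq\sigma_i\sigma_j\leq\tfrac12(\sigma_i^2+\sigma_j^2)$, so for every pair $i,j$,
\[
\E(X_i-X_j)^2 \;=\; \sigma_i^2+\sigma_j^2-2\Sigma^*_{ij} \;\leq\; 2(\sigma_i^2+\sigma_j^2) \;=\; \E(Y_i-Y_j)^2.
\]
The Sudakov--Fernique comparison inequality for centered Gaussian processes then gives $\E\max_{i\in S_j} X_i\leq\E\max_{i\in S_j} Y_i$ for each set $S_j$. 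Writing $Y\stackrel{d}{=}\sqrt 2\,Y'$ with $Y'_i\sim\N(0,\sigma_i^2)$ independent, we obtain $\E\max_{i\in S_j} Y_i=\sqrt 2\,\E\max_{i\in S_j} Y'_i$, and summing over $j$ yields
\[
\opt \;=\; \sum_{j=1}^m \E\max_{i\in S_j} X_i \;\leq\; \sqrt 2\sum_{j=1}^m \E\max_{i\in S_j} Y'_i \;\leq\; \sqrt 2\cdot \opt_{\genb},
\]
since the variances $\sigma_i^2$ (with $\sum_i\sigma_i^2=1$) form a feasible \genb solution on the same sets $S_1,\dots,S_m$.

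To finish, apply Theorem~\ref{thm:polylog-approx} to this \genb instance: it returns in polynomial time a variance vector $(\hatsigma_1,\ldots,\hatsigma_n)$ whose independent expected objective is at least $\Omega(1/\log n)\cdot\opt_{\genb}\geq\Omega(1/\log n)\cdot\opt$. I then output $\widehat{\Sigma}:=\mathrm{diag}(\hatsigma_i^2)$, which is trivially PSD; the Gaussian vector $\N(0,\widehat{\Sigma})$ has independent coordinates, so its correlated objective matches the independent one guaranteed by Theorem~\ref{thm:polylog-approx}, giving the desired approximation factor. I expect no substantive obstacle beyond recalling Sudakov--Fernique; the heavy lifting is already done in the proof of Theorem~\ref{thm:polylog-approx}, and the key conceptual point is simply that the PSD constraint prevents $\Sigma^*$ from making the increments $\E(X_i-X_j)^2$ much larger than in the independent case, so correlation can improve the objective by at most a constant factor.
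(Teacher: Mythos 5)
Your reduction is correct, and it takes a genuinely different route from the paper's. The paper also reduces \genbcorr\ to \genb\ and then reuses the algorithm of Theorem~\ref{thm:polylog-approx}, but its correlation gap is Lemma~\ref{lem:corr-ind-gap}: for each threshold $t$ it compares $\Pr[\max_i X_i\geq t]$ (controlled by a union bound, using only that the marginals match) with $\Pr[\max_i Y_i\geq t]\geq 1-e^{-\sum_i q_i}$ for the independent copy, integrates the tails, and finishes with Lemma~\ref{lem:max0}, losing a factor $\frac{2e}{e-1}$. You instead invoke the Sudakov--Fernique comparison: PSD-ness forces $\E(X_i-X_j)^2\leq 2(\sigma_i^2+\sigma_j^2)$, so the correlated process is dominated by independent Gaussians with doubled variances, which after rescaling gives the sharper constant $\sqrt{2}$. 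What the paper's argument buys in exchange is elementarity (no Gaussian comparison theorem; indeed its correlated side uses nothing beyond matching marginals) and the fact that it is stated and proved for arbitrary common means $\mu_i$, which the \genbcorr\ instance carries. Your argument as written is confined to centered processes: with nonzero means you would need the equal-means form of Sudakov--Fernique, and the rescaling step $Y=\sqrt{2}\,Y'$ no longer commutes with the means, so the identity $\E\max_{i\in S_j}Y_i=\sqrt{2}\,\E\max_{i\in S_j}Y'_i$ must be replaced, e.g.\ by splitting off $\max_{i\in S_j}\mu_i$ and handling the centered part as in Lemma~\ref{lem:max0}, at the cost of another constant. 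That is a routine patch rather than a conceptual gap; the rest of your chain (feasibility of the diagonal variances $\Sigma^*_{ii}$ for \genb, $\opt\leq\sqrt{2}\,\opt_{\genb}$, running the independent algorithm, and outputting the diagonal $\widehat{\Sigma}$) is sound.
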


The main proof ideas and algorithms are discussed in Section~\ref{sec:overview}. For complete proofs,
the optimization problem with a single set is discussed in Section~\ref{sec:singleset}. 
The proof of Theorem~\ref{thm:PTAS-anymean} is discussed in Section~\ref{subsec:independent}. The proof of Theorem~\ref{thm:PTAS-correlated} is discussed in Section~\ref{subsec:correlated}. The optimization problem with multiple sets is discussed in Section~\ref{sec:multipleset}. The proofs of Theorem~\ref{thm:polylog-approx} and Theorem~\ref{thm:polylog-approx-corr} are discussed in Section~\ref{subsec:multiple}.

\subsection{Properties of optimal solution}\label{sbs:sims}

In order to understand the structure of the optimal variance allocations, we run Monte-Carlo simulations on the Erd\'{o}s-Renyi random graphs and obtain the following plots (Figures~\ref{payoff:fig} and~\ref{fig:comp}) that characterize the optimal value ($\opt$) and the corresponding variance allocation for \genb. We found $\opt$ to be concave, and the corresponding allocation to be concentrated on a few variables, as the density of the instance increased -- a somewhat counterintuitive pattern -- even for small random graphs (see Figures~\ref{payoff:fig} and~\ref{fig:comp}):
\begin{itemize}
\item {\em Concavity:} The solution value $\opt$ appears to be concave as a function of $p=\frac{|S_j|}{n}$, irrespective of whether the Gaussian random variables are restricted to be independent, positive or negatively correlated (see Figure~\ref{payoff:fig}). 
The concavity is formally verified in Theorem~\ref{thm:per-set-submodular}, which shows that the objective for \genb, as a function of $p$, here $p=\frac{|S_j|}{n}$, is concave. The key to the proof is the submodularity of the objective function.
\item {\em Concentration:} The variances tend to concentrate, i.e., they are supported on a smaller subset of variables with increasing $p$, where $p=\frac{|S_j|}{n}$ (see Figure~\ref{fig:comp}(a,b,c,d)). This concentration is formally verified in Theorem~\ref{thm:random-graph-ordering}, which shows that only $\Theta(\frac{1}{p})$ variables are allocated a variance $\Omega(p)$ in the optimal allocation, i.e., most variances are small in the optimal allocation as $p$ increases. The proof of this theorem relies on Lemma~\ref{lem:eps-contribution}, which is our key structural lemma, and also is used in the proof of correctness of our approximation algorithms (c.f. Subsection~\ref{sbs:results}).
\end{itemize}
\begin{figure}[htb]
    \centering
  \includegraphics[scale=0.12]{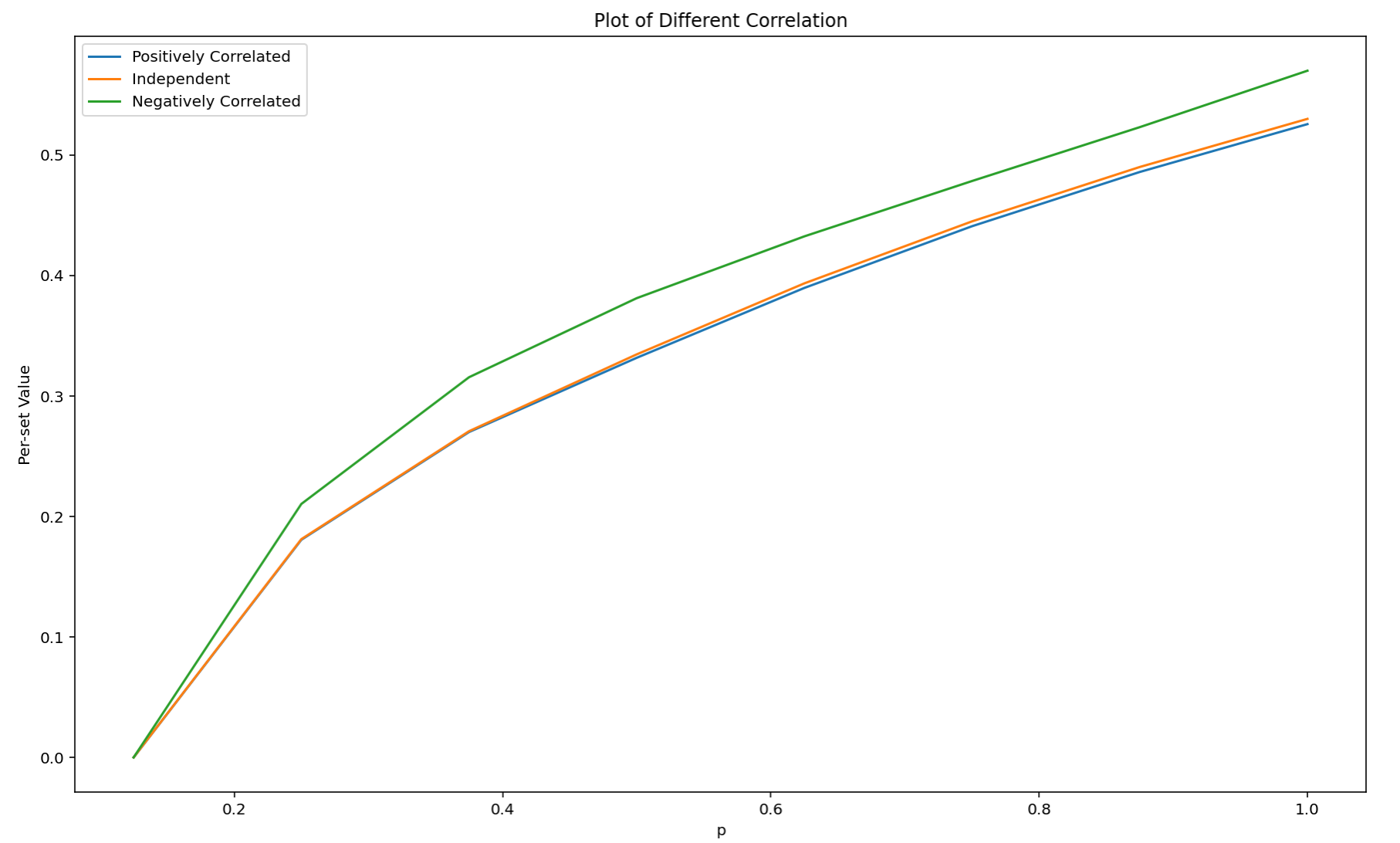}
  \caption{Concavity of optimal objective value per set (i.e. $\frac{1}{m}\opt$) when the Gaussians are restricted to be independent, be positively, or be negatively correlated in \genb, as a function of $p$ for Erd\'{o}s-Renyi graphs with $p\in\{\frac{1}{8},\frac{2}{8},...,\frac{8}{8}\}$ and $n=8$. To improve the simulation speed the positive and negative correlations are limited to block-diagonal matrices with block sizes $2\times2$, and $\Sigma_{2i+1,2i+2}=\pm\sqrt{\Sigma_{2i+1,2i+1}\cdot\Sigma_{2i+2,2i+2}}$. See also Theorem~\ref{thm:per-set-submodular} for the discussion of the independent case.}\label{payoff:fig}
\end{figure}
\begin{figure}[htb!]
    \centering
    \subfloat[][Optimal variance allocation for non-zero mean independent Gaussians, where the mean vector is $(0,0,0,0,0,0,1,1)$]{\includegraphics[scale=0.08]{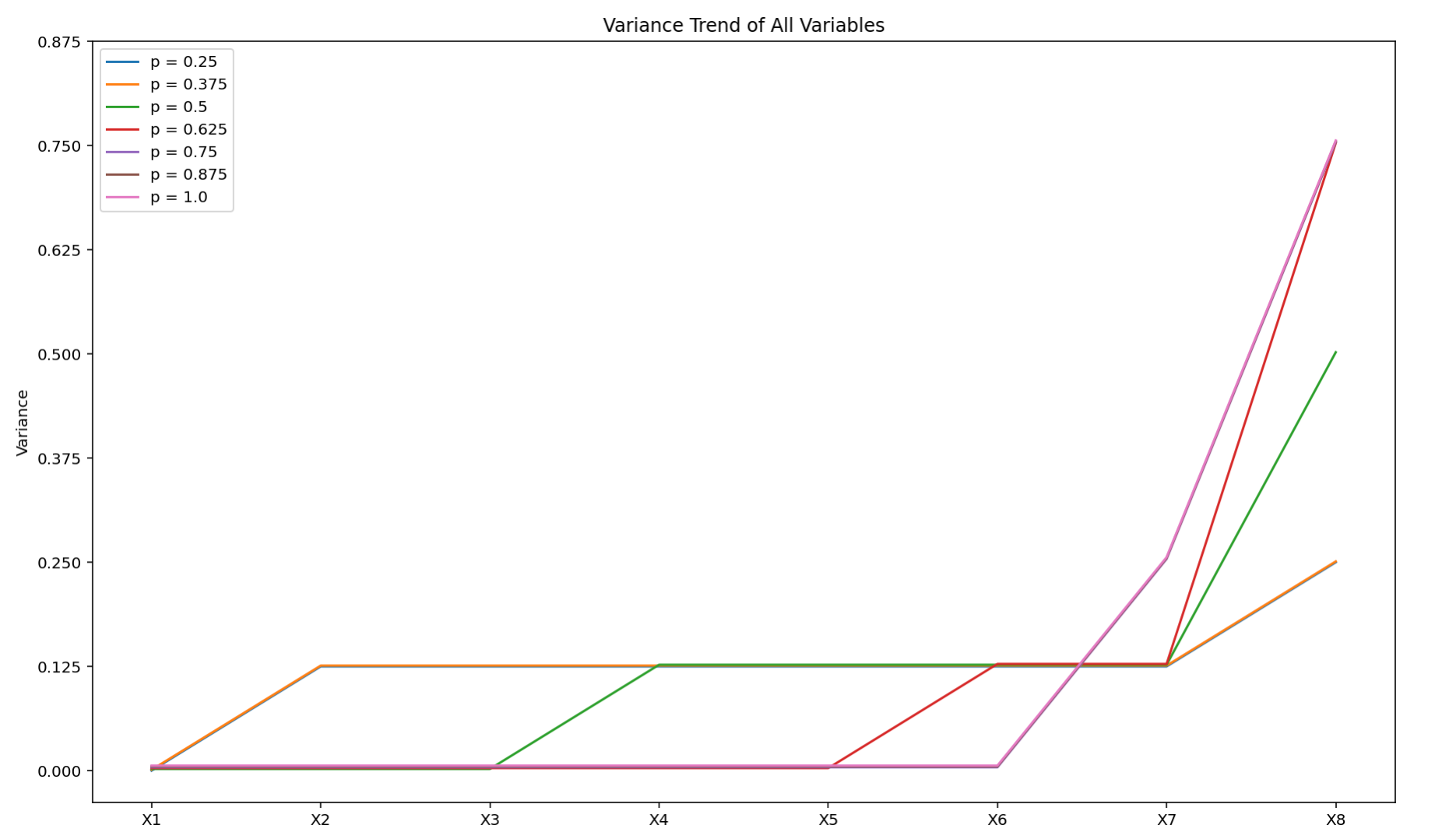}}\qquad
    \subfloat[][Optimal variance allocation for zero mean independent Gaussians]{\includegraphics[scale=0.08]{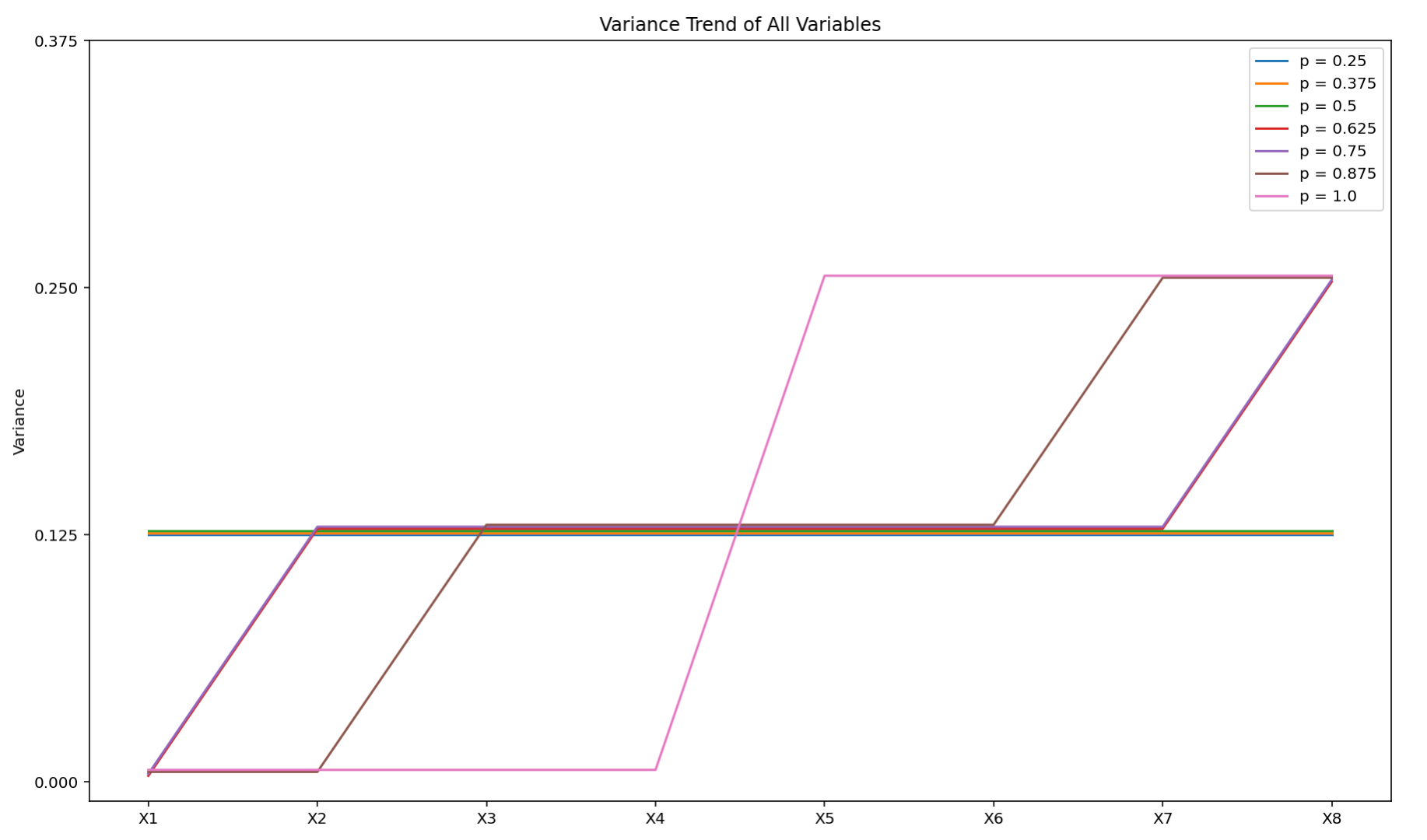}}\\
    \subfloat[][Optimal variance allocation for zero mean positively correlated Gaussians]{\includegraphics[scale=0.08]{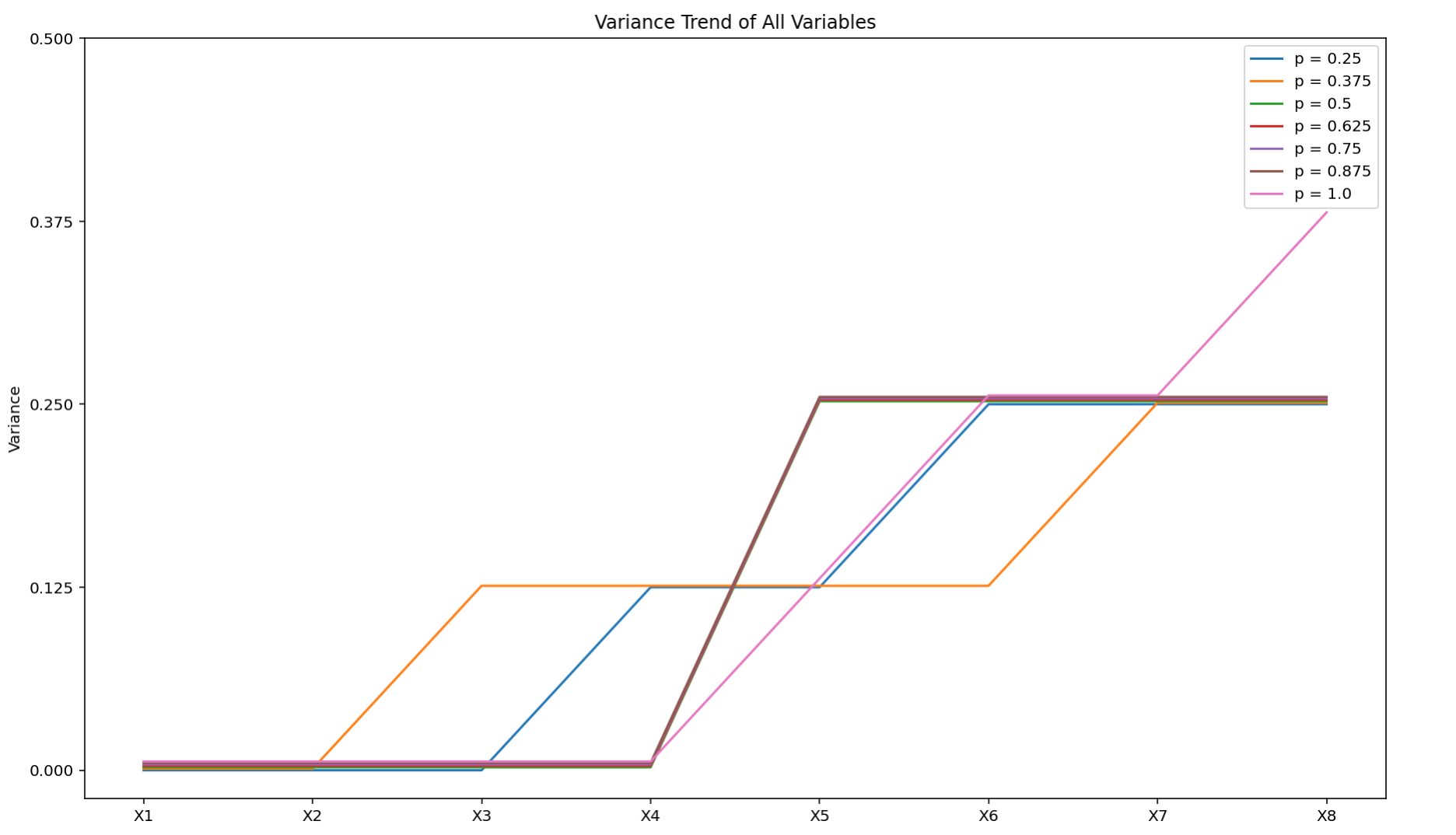}}\qquad
    \subfloat[][Optimal variance allocation for zero mean negatively correlated Gaussians]{\includegraphics[scale=0.08]{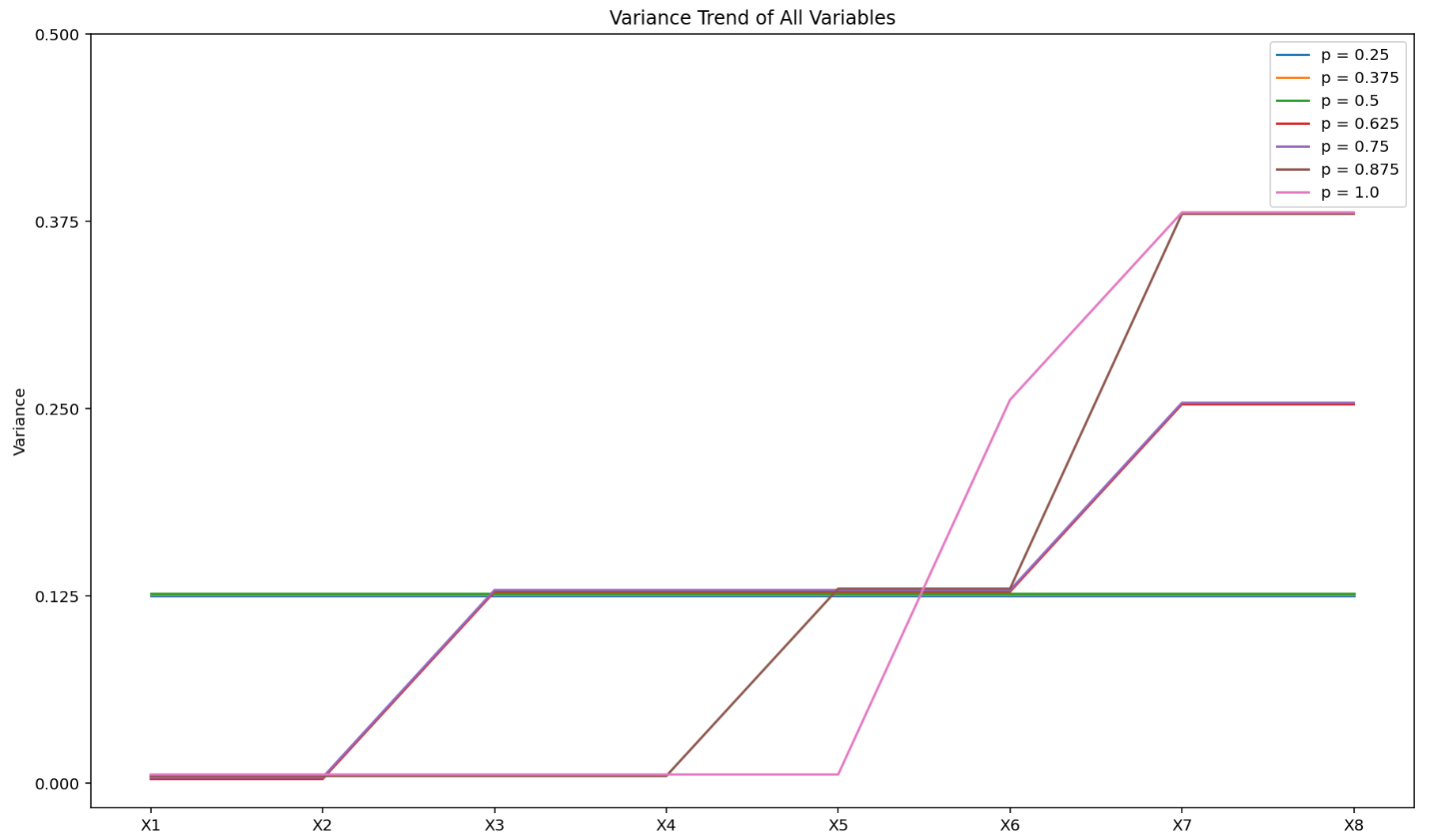}}
\caption{Simulations for Erd\'{o}s-Renyi graphs $G(n,p)$ for 
\genb~($n$ is held constant). $x$-axis corresponds to the $n$ variances (sorted by increasing values) and $y$-axis corresponds to their optimal values in allocation. Note the increasingly concentrated variance allocation with increasing $p$. See also Theorem~\ref{thm:random-graph-ordering}.}
\label{fig:comp}
\end{figure}


\begin{restatable}{theorem}{thmpersetsubmodular}\label{thm:per-set-submodular}
  Let $I_k$ be an instance for \genb\ with all $\binom{n}{k}$ sets of size $k$: for every $1\leq i_1<i_2<\cdots<i_k\leq n$, there is an set $S_j=\{i_1,\cdots,i_k\}$ in this instance. Let $f(k)=\frac{1}{\binom{n}{k}}\opt_{I_k}$ be the per-set contribution to the optimal objective in instance $I_k$. Then $g(k)$ is a concave function.  
\end{restatable}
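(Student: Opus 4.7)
The plan is to leverage the submodularity of the expected-max objective plus an averaging argument, then combine with the permutation symmetry of $I_k$. First, I would show that for any fixed allocation $\sigma=(\sigma_1,\dots,\sigma_n)$, the set function $g_\sigma(S):=\E\max_{i\in S} X_i$ is monotone and submodular on $2^{[n]}$: pointwise in any realization of the Gaussians, $S\mapsto\max_{i\in S}X_i$ has the diminishing-returns property (adding an element raises the max of a smaller set by at least as much as the max of a larger set), and taking expectations preserves the inequality. This is the ``submodularity of the objective function'' the authors highlight.

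The combinatorial core is a lemma I would prove: for any submodular $g:2^{[n]}\to\R$, the average $\bar g(k):=\binom{n}{k}^{-1}\sum_{|S|=k}g(S)$ is concave in $k\in\{0,1,\dots,n\}$. The proof sums the diamond inequality $g(A)+g(B)\geq g(A\cup B)+g(A\cap B)$ over ordered pairs $(A,B)$ with $|A|=|B|=k$ and $|A\cap B|=k-1$. A count of multiplicities (each $k$-set contributes $2k(n-k)$ on the LHS; each $(k+1)$-set $A\cup B$ arises with multiplicity $k(k+1)$ and each $(k-1)$-set $A\cap B$ with multiplicity $(n-k)(n-k+1)$ on the RHS), together with the identities $(k+1)\binom{n}{k+1}=(n-k)\binom{n}{k}$ and $(n-k+1)\binom{n}{k-1}=k\binom{n}{k}$, collapses the inequality cleanly to the midpoint-concavity $2\bar g(k)\geq\bar g(k-1)+\bar g(k+1)$.

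Applying the lemma to $g_\sigma$ gives concavity of $\bar g_\sigma(k)$ in $k$ for every fixed $\sigma$. To upgrade to concavity of $f(k)=\max_\sigma\bar g_\sigma(k)$, I would invoke the permutation symmetry of $I_k$: both the objective $\sum_{|S|=k}\E\max_{i\in S}X_i$ and the constraint $\sum\sigma_i^2=1$ are invariant under the action of the symmetric group on $[n]$, so the set of optima is closed under permutations. In the clean case where a symmetric allocation (most naturally the uniform one, $\sigma_i^2=1/n$) simultaneously attains the optimum for $I_{k-1}$, $I_k$, and $I_{k+1}$, one can set $\sigma^*=\sigma^*_{k-1}=\sigma^*_k=\sigma^*_{k+1}$ and read off $2f(k)=2\bar g_{\sigma^*}(k)\geq\bar g_{\sigma^*}(k-1)+\bar g_{\sigma^*}(k+1)=f(k-1)+f(k+1)$ directly from the lemma.

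The main obstacle is this last symmetrization step. The upper envelope of a family of concave functions is not automatically concave, so one genuinely needs a $k$-independent optimizer, and the optimal allocation can in fact shift with $k$ (small experiments already show the support size of the optimum can depend on $k$). Moreover $\bar g_\sigma(k)$ is convex—not concave—in $\sigma$, so naive averaging of an optimum over $S_n$ does not preserve optimality. The right tool is to establish a Schur-concavity of $\bar g_\sigma(k)$ in the variance vector $t=(\sigma_i^2)$ on the simplex $\sum t_i=1$, which would force every optimum to be majorized by the uniform vector and thus yield a common symmetric optimizer; alternatively one needs a direct coupling argument among the optimal allocations at $k-1,k,k+1$. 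This Schur-concavity / coupling step is where the bulk of the technical work lies.
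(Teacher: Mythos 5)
The fixed-allocation half of your plan is correct and is essentially a cleaner packaging of what the paper does. The paper proves, for each fixed $\boldsigma$, that $f_{\boldsigma}(k)=\frac{1}{\binom{n}{k}}\obj_{I_k}(\boldsigma)$ is concave in $k$ by a hands-on exchange inequality, $\max(a,b)+\max(a,c)\geq\max(a,b,c)+a$ (hence a symmetric four-variable inequality), applied pointwise after sampling a random $(k-3)$-set plus a random triple; your route --- submodularity of $S\mapsto\E\max_{i\in S}X_i$ plus a general lemma that the uniform average of any submodular function over $k$-sets is concave in $k$, proved by double counting the diamond inequality over ordered pairs with $|A\cap B|=k-1$ --- is the same idea in more general form, and your multiplicity counts ($2k(n-k)$ per $k$-set, $k(k+1)$ per union, $(n-k)(n-k+1)$ per intersection) are correct.

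The genuine gap is the envelope step, which you flag but do not close: concavity of every $f_{\boldsymbol\sigma}(\cdot)$ does not give concavity of $f(k)=\max_{\boldsymbol\sigma}f_{\boldsymbol\sigma}(k)$, and neither of your proposed repairs is carried out. The symmetrization idea needs the objective to be invariant under permutations of the variables, which fails whenever the means $\mu_i$ are unequal (the theorem does not assume zero or identical means), and the Schur-concavity claim is unproven --- indeed your own remark that the optimal support size varies with $k$ suggests a single $k$-independent optimizer need not exist. You should know, though, that the paper's proof handles exactly this step with the one-line assertion that $f$ is concave ``as it's the maximum over a set of concave functions,'' which is not a valid principle in general (pointwise minima, not maxima, preserve concavity). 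So the obstacle you identified is real, is not actually resolved by the paper's argument either, and a complete proof requires either supplying the missing common-optimizer/coupling argument (e.g., in the symmetric equal-means case) or restating the theorem for a fixed allocation, where both your argument and the paper's are complete.
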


\begin{restatable}{theorem}{thmrandomgraphordering}\label{thm:random-graph-ordering}
For any $\delta>0$ and a random instance of \genb with $n,m\to\infty$, with probability $p >1-\delta$, there are $\Theta(\frac{1}{p})$ variables allocated variance $\Omega(p)$.
\end{restatable}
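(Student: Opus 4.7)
The plan is to handle the upper and lower directions of the $\Theta(1/p)$ count separately. The upper direction, that at most $O(1/p)$ variables have $\sigma_i^2 \ge c_1 p$ in any feasible allocation, is immediate from the budget: $k c_1 p \le \sum_i \sigma_i^2 = 1$ gives $k = O(1/p)$.

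For the lower direction, I would first establish the baseline $\opt \ge \Omega(m\sqrt p)$ that holds with probability at least $1-\delta/2$ over the random instance. Consider the explicit feasible allocation that places variance $p$ on an arbitrary subset $R$ of $\lfloor 1/p \rfloor$ variables and zero elsewhere. For each random $S_j$, $|R \cap S_j| \sim \mathrm{Binomial}(\lfloor 1/p\rfloor, p)$ has mean close to $1$, so with constant probability $|R \cap S_j| \ge 2$; on this event the expected max of i.i.d.\ $N(0,p)$ Gaussians is $\Omega(\sqrt p)$, giving a per-set expected contribution of $\Omega(\sqrt p)$. Summing over the $m$ independent sets and applying Hoeffding (each per-set quantity is a.s.\ at most $O(\sqrt{\log n})$) then delivers the baseline with the required probability as $m \to \infty$.

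Next I would proceed by contradiction: assume the optimum has $k < c_1 / p$ variables with variance $\ge c_2 p$; let $K$ be this set and $\overline K$ its complement. Using $\max_{i \in S_j} X_i \le (\max_{i \in S_j \cap K} X_i)_+ + (\max_{i \in S_j \cap \overline K} X_i)_+$, I split the objective into a $K$-piece and an $\overline K$-piece. The $K$-piece is bounded via $\E(\max_{i \in S_j \cap K} X_i)_+ \le \sum_{i \in S_j \cap K} \E(X_i)_+ = \tfrac{1}{\sqrt{2\pi}} \sum_{i \in S_j \cap K} \sigma_i$; summing over $j$ and regrouping,
\[
\sum_j \E(\max_{i \in S_j \cap K} X_i)_+ \le \tfrac{1}{\sqrt{2\pi}} \sum_{i \in K}\sigma_i \cdot |\{j : i \in S_j\}|.
\]
A Chernoff plus union bound (valid once $mp \gtrsim \log n$, consistent with $n,m\to\infty$) gives $|\{j : i \in S_j\}| \le 2mp$ for every $i \in K$ with high probability, and Cauchy-Schwarz gives $\sum_{i \in K}\sigma_i \le \sqrt k \cdot \sqrt{\sum_{i \in K}\sigma_i^2} \le \sqrt{c_1/p}$; together these bound the $K$-piece by $O(mp\sqrt{c_1/p}) = O(m\sqrt{c_1 p})$. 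For the $\overline K$-piece, I invoke Lemma~\ref{lem:eps-contribution} with $\eps = c_2 p$, whose modified chaining argument bounds the total contribution from variance-$\le c_2 p$ variables by $O(m\sqrt{c_2 p})$, avoiding the extraneous $\sqrt{\log n}$ factor that a naive Gaussian-max bound would introduce.

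Combining, the total objective is at most $m\sqrt{p}\bigl(C_1\sqrt{c_1} + C_2\sqrt{c_2}\bigr)$. Choosing $c_1, c_2$ small enough relative to the absolute constant in the $\Omega(m\sqrt p)$ baseline yields a strict inequality, contradicting the baseline; hence $k \ge c_1/p$ as required. The principal obstacle is ensuring that the $\overline K$-piece is bounded with the tight $\sqrt{\eps}$ scaling rather than the naive $\sqrt{\eps \log n}$ one — a $\sqrt{\log n}$ factor would dominate the baseline for large $n$ and destroy the argument — which is precisely the improvement that Lemma~\ref{lem:eps-contribution}'s modified chaining is designed to deliver. A minor technical point is the positive-part decomposition, which contributes a harmless additive slack that can be absorbed into the constants or eliminated by a Slepian-type comparison.
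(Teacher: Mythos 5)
Your overall strategy matches the paper's: establish the baseline $\opt=\Omega(m\sqrt p)$ via the explicit allocation of variance $p$ on $\Theta(1/p)$ variables, get the upper count of $O(1/p)$ from the budget, and derive a contradiction by splitting the objective into a high-variance piece (handled by degree concentration plus Cauchy--Schwarz, exactly as in the paper) and a low-variance piece handled by Lemma~\ref{lem:eps-contribution}. However, there is a genuine gap in your treatment of the $\overline K$-piece. Lemma~\ref{lem:eps-contribution} is stated with $\eps$ bounding the \emph{standard deviations} (variances at most $\eps^2$) and total variance at most $1$, and it yields $O(\eps\sqrt{\ln(1/\eps)})$, not $O(\eps)$. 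With your variance threshold $c_2p$ you must take $\eps=\sqrt{c_2p}$, so a direct per-set application gives $O\bigl(\sqrt{c_2 p\,\ln\frac{1}{c_2p}}\bigr)$ per set, i.e.\ $O\bigl(m\sqrt{c_2p\,\ln\frac{1}{c_2p}}\bigr)$ in total --- not the $O(m\sqrt{c_2p})$ you claim. The lemma removes a $\sqrt{\log n}$ factor, not all logarithmic factors, and the surviving $\sqrt{\ln(1/(c_2p))}$ cannot be absorbed by choosing the constant $c_2$ small once $p$ is allowed to vanish with $n$: the comparison against the baseline $\Omega(m\sqrt p)$ then fails for every fixed $c_2$.

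The missing ingredient, which is exactly how the paper closes this step, is a per-set renormalization. Writing $s_j=\sum_{i\in S_j}\sigma_i^2$ and applying Lemma~\ref{lem:eps-contribution} to the rescaled variables $X_i/\sqrt{s_j}$, whose individual variances are at most $c_2p/s_j$, gives a per-set bound of $\sqrt{s_j}\cdot O\bigl(\sqrt{\tfrac{c_2p}{s_j}\ln\tfrac{s_j}{c_2p}}\bigr)=O\bigl(\sqrt{c_2p\,\ln\tfrac{s_j}{c_2p}}\bigr)$; combined with the high-probability bound $s_j=O(p)$ for the within-set variance, the logarithm becomes $\ln(1/c_2)$, a constant depending only on the threshold constant and not on $p$ or $n$. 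This yields a total of $O\bigl(m\sqrt{c_2p\,\ln\tfrac1{c_2}}\bigr)$, and since $c_2\ln(1/c_2)\to0$ the contradiction with the baseline goes through uniformly in $p$. Without this rescaling (or some substitute argument), your proof only works when $p$ is bounded below by a constant. The remaining differences (requiring $|R\cap S_j|\ge 2$ rather than $\ge 1$ in the baseline, and concentrating via Hoeffding on per-set conditional expectations rather than Chernoff on the Bernoulli indicators) are harmless, though they carry the same implicit requirement $mp\gtrsim\log n$ that the paper also leaves informal.
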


\subsection{Motivating applications}\label{sbs:mot}
In this subsection, we outline some motivating applications. The theoretically inclined reader may skip this subsection. It is well-known that the study of bounds on the maximum value of stochastic processes $\E[\max_t X_t]$ has important applications, both practical ('What is the maximum magnitude of an earthquake in this region?' or 'What is the maximum level a river will rise?') as well as theoretical ones such a relation between properties of metrics spaces and bounds on stochastic processes. Those are discussed in detail in \cite{talagrand}.

\paragraph{Stochastic Gradient Descent}
The study of upper bounds on the expected behavior of stochastic processes also plays a vital role in ML. The stochastic gradient descent procedure (SGD) can be linearized near the local minima\footnote{See also the literature on wide neural networks as Gaussian processes~\citep{wwnngp}.} and viewed in the limit of small step size as an Ornstein Uhlenbeck (OU) process, which is a
Ito diffusion of the form:
\begin{equation}\label{eqn:ou}
    d\vec{X}(t) = \Phi(\vec{\mu}-\vec{X}(t))dt+\sigma dW(t),
\end{equation}
where $\vec{X},\vec{\mu}\in\mathbb{R}^n,\Phi,\sigma\in\mathbb{R}^{n\times n}$ and $W(t)$ is $n$-dimensional Standard Brownian Motion (see~\cite{oks} for details). When $\Phi$ is a diagonal matrix, it corresponds to changing the relative weight of different features. The long-term distribution of $\vec{X}$ (obtained after many steps of SGD), can be shown to be a Gaussian with variance $\Sigma\Sigma^T:=\Phi^{-1}\sigma\sigma^T$ whenever\ $\Sigma,\sigma,$ and $\Phi$ are diagonal. 

\paragraph{Computational biology}
Expectation maximization is a common objective used for estimating parameters of mixture models in quantitative biology applications~\citep{qb1,qb2,qb3,em1}. Whenever the components $X_i$ represent the likelihood of different events predicted by the ML system optimized by SGD, then the objective function $\E[\max X_i]$ corresponds to the maximum likelihood associated with the prediction (a common proxy for the log-likelihood obtained by replacing the softmax by a max). From that perspective, the Variance Allocation problem is a way of studying how the maximum likelihood changes based on changing the relative weight of the features $\Phi$ which are in one-to-one correspondence with the covariance matrix $\Sigma$.

\paragraph{Auction Optimization} 
A recent trend in the intersection of auction theory and ML \citep{bergemann2022calibrated,ChenLXZ24} is to maximize utility
by making the predictions that are fed into the auction more accurate (as opposed of optimizing auction rules directly). Previous papers in that line of work optimize the auction by introducing complex correlations between the predictions associated with different auction participants. From this perspective, the Variance Allocation Problem can be seen as a way to optimize auctions by changing the accuracy/variance of the prediction associated with different auction participants.

To make this precise, consider the setting of an advertising auction where the platform chooses the winner based on the product between their bid and a predicted click-through-rate and assume that the prediction system has the property that the limit distributions are Gaussian with variances controlled by the weights given to the different features, as in the SGD-example above. If the platform is using a first price auction (which is the common practice in display ads), the maximum $\E[\max_i X_i]$ corresponds to the revenue of the auction platform. From that perspective, the Variance Allocation problem is revenue optimization where the decision variable is the relative weight allocated to each feature in the prediction model.

\subsection{Related work}\label{sbs:related}
Talagrand~\citep{talagrand} has extensively studied problems of the form $\mathbb{E}[\sup_{t\in T}\{X_t\}]$, where $X_t$ are Gaussian. However, we are not aware of any of his work for the corresponding problems \basic, \gena, \genb, or \genbcorr, where we have (1) a budget on the total variance of the underlying Gaussian variables, and (2) the variables may have non-zero means. That said, our main structural lemma (Lemma~\ref{lem:eps-contribution}) does rely on a chaining argument (as do many other results in this area), which has been discussed thoroughly in Chapter 2 of~\cite{talagrand}. However, a direct application of methods in Chapter 2 of~\cite{talagrand} would give an upper bound of $O(\eps\sqrt{\log n})$ as opposed to $O(\eps\sqrt{\log \frac{1}{\eps}})$ (cf. Lemma~\ref{lem:eps-contribution}).


Our approximation algorithms corresponding to Theorems~\ref{thm:PTAS-correlated} and~\ref{thm:polylog-approx} rely crucially on Lemma~\ref{lem:eps-contribution}. We are able to obtain a PTAS because Lemma~\ref{lem:eps-contribution} implies that most of the variables will be set to zero in the optimal solution (a version of that result is in Theorem~\ref{thm:random-graph-ordering}). The logarithmic approximation algorithm on the other hand is from a reduction to submodular maximization, which is NP-hard but admits an efficient approximation algorithm~\citep{nemhauser1978analysis}. 
As an aside, it's worth noting that many variants of submodular maximization have been studied before \citep{Vondrak13}, including for non-monotone submodular functions \citep{BuchbinderF18}, monotone submodular functions \citep{FeigeMV11}, and submodular maximization with a matroid constraint \citep{BuchbinderF24}.

\section{Overview of technical results and proofs}\label{sec:overview}
In this section we provide an overview of the proofs of Theorems~\ref{thm:PTAS-anymean},~\ref{thm:PTAS-correlated} and~\ref{thm:polylog-approx} from Section~\ref{sbs:results}. The proof of Lemma~\ref{lem:eps-contribution}, which may be of independent interest, is presented in this section as well. The remaining detailed proofs are deferred to the appendix.

\subsection{Expectation maximization for a single set with independent variables}

As the optimization problem underlying \basic, \gena, and \genb~are non-convex, even when there is a single set, it is hard to find an efficient algorithm to solve the problem exactly. Although the solution space of the problem has $n$ dimensions for the independent setting and $n^2$ dimensions for joint Gaussian variables, it is still possible to find a polynomial-time approximation scheme, either additive or multiplicative.

The main intuition for the PTAS is as follows. We want to reduce the search space of near-optimal solutions. Firstly, all random variables with variance at most $\eps^2$ can only contribute to $\tilde{O}(\eps)$ in the optimization objective.\footnote{$\tilde{O}(\cdot)$ means the standard big O notation ignoring any poly-logarithmic factor.} Such an observation is formally shown by Lemma~\ref{lem:eps-contribution}. Therefore, we only need to consider solutions with positive variances of the variables being large enough $(\geq\eps^2)$, and there can only be $O(\eps^{-2})$ such variables, which is a constant for any fixed $\eps$. 
\begin{restatable}{lemma}{lemepscontribution}
\label{lem:eps-contribution}
For any $\eps\in(0,1)$ and positive integer $m$, let $m$ zero-mean Gaussian random variables $(Y_1,Y_2,\cdots,Y_n)\sim \N(0,\Sigma)$ satisfy $0\leq \Sigma_{11},\Sigma_{22},\cdots,\Sigma_{mm}\leq \eps^2$ and $\sum_{i}\Sigma_{ii}\leq 1$, we have $\E\max(0,\max_{i\in[m]}Y_i)=O(\eps\sqrt{\ln\frac{1}{\eps}})$.
\end{restatable}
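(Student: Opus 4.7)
The plan is a dyadic decomposition of the variables by variance magnitude, followed by the standard maximum-of-Gaussians bound applied one level at a time. Write $\sigma_i^2 := \Sigma_{ii}$ and, for each integer $k \geq 1$, define the level
\[
L_k \;=\; \Bigl\{\, i \in [m] : \tfrac{\eps^2}{4^{k}} < \sigma_i^2 \leq \tfrac{\eps^2}{4^{k-1}} \Bigr\}.
\]
Since each $\sigma_i^2 \leq \eps^2$, the sets $\{L_k\}_{k \geq 1}$ cover the support of the variances. The key counting inequality follows from the budget: the lower bound $\sigma_i^2 > \eps^2/4^{k}$ on $L_k$ together with $\sum_i \sigma_i^2 \leq 1$ gives $|L_k| \leq 4^{k}/\eps^2$. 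This is precisely what replaces the dependence on $m$ by a dependence on $\eps$.

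At each level I would invoke the classical MGF bound, which holds for arbitrarily correlated zero-mean Gaussians $Z_1,\ldots,Z_N$ with $\mathrm{Var}(Z_i) \leq \tau^2$: appending the constant $Z_0 \equiv 0$ and optimizing over $\lambda > 0$ in $\exp(\lambda \E\max_i Z_i) \leq \sum_i \E\exp(\lambda Z_i)$ yields $\E\max(0, \max_i Z_i) \leq \tau \sqrt{2\ln(N+1)}$. Applying this with $\tau_k = \eps/2^{k-1}$ and $N_k = |L_k| \leq 4^k/\eps^2$ at level $L_k$ gives
\[
\E \max\bigl(0,\, \max_{i \in L_k} Y_i\bigr) \;=\; O\!\left( \eps \cdot 2^{-k} \sqrt{k + \ln(1/\eps)} \right).
\]

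To aggregate, use the pointwise inequality $\max(0, \max_{i \in [m]} Y_i) \leq \sum_{k \geq 1} \max(0, \max_{i \in L_k} Y_i)$, take expectations, and sum the per-level estimates. Splitting the series at $k^{\star} \approx \log_2(1/\eps)$, the head $k \leq k^\star$ contributes $O(\eps \sqrt{\ln(1/\eps)})$ because $\sqrt{k + \ln(1/\eps)} = O(\sqrt{\ln(1/\eps)})$ there and $\sum_{k \geq 1} 2^{-k} \leq 1$; the tail $k > k^\star$ contributes only $O(\eps \cdot 2^{-k^\star}\sqrt{k^\star}) = O(\eps^2\sqrt{\ln(1/\eps)})$, which is negligible, because the geometric $2^{-k}$ decay dominates the $\sqrt{k}$ growth. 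Combining the two pieces gives the target $O(\eps\sqrt{\ln(1/\eps)})$.

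The main conceptual obstacle is to beat the naive $O(\eps\sqrt{\ln m})$ estimate one would obtain by a single application of the max-of-Gaussians bound (which is the same bound Talagrand's standard chaining gives here, as noted in the related-work discussion). What makes the improvement possible is that the variance budget $\sum_i\sigma_i^2 \leq 1$ forces variables with larger individual variance to be few, so at level $k$ the effective cardinality is $4^k/\eps^2$ rather than $m$, and the geometric shrinkage of $\sigma_{\max}$ across dyadic scales more than absorbs the $\sqrt{k+\log(1/\eps)}$ log-penalty. Notably, independence of the $Y_i$'s is never used, so the same proof handles the correlated-covariance setting of \gena\ and \genbcorr.
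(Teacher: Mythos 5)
Your proposal is correct and follows essentially the same route as the paper's proof: a dyadic grouping of the variables by variance scale, a per-group MGF (max-of-Gaussians) bound whose cardinality is controlled by the unit variance budget ($|L_k|\leq 4^k/\eps^2$, matching the paper's $m_j<2^{2j}\eps^{-2}$), and summation over levels where the geometric factor absorbs the $\sqrt{k}$ growth. Your observation that independence is never used also matches the paper, which states and uses the lemma for general covariance $\Sigma$.
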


We observe that a weaker bound of $O(\eps\sqrt{\log n})$ follows from directly applying Talagrand's chaining arguments in Chapter 2 of~\cite{talagrand}.  The weaker bound leads to a QPTAS (i.e. it runs in $n^{\text{polylog}(n)}$ time) since we can only guarantee $\text{polylog}(n)$ variables with negligible variance. Here we are able to obtain the stronger bound of $O(\eps\sqrt{\log \frac{1}{\eps}})$, which is essential to the PTAS. 

Our proof begins by following the approach in \cite{talagrand} of approximating the maximum as a sum over maximums of smaller subsets of random variables. However, crucially, since the total variance is bounded, we are able to prove tighter upper bounds on the contribution of each group (Equation~\ref{eqn:crux}).  The upshot of these improved bounds is to 
replace a $\log n$ factor by a $\log\frac{1}{\eps}$. 


\begin{proof}[Proof of Lemma~\ref{lem:eps-contribution}]
Firstly, we group these random variables into groups $G_1,G_2,\cdots$ according to their variance. Each group $G_j$ contains all random variables $Y_i$ with variance $\Sigma_{ii}$ satisfying $2^{-j}\eps<\sqrt{\Sigma_{ii}}\leq 2^{1-j}\eps$. For variables in group $G_j$, let $Z_j=\max(0,\max_{Y_i\in Z_j}Y_i)$ be the maximum of all variables in this group (lower-bounded by 0). Let $m_j=|G_j|$ be the number of variables in this group. We have for any $t\geq 0$,

$$e^{t\E Z_j} \leq \E[e^{tZ_j}] =\E[e^{t\max(0,\max_{Y_i\in G_j}Y_i)}]  =\E\max\left(e^{t\cdot 0},\max_{Y_i\in G_j}e^{Y_i}\right)$$
by the convexity of the exponential function and Jensen's inequality and the definition of $Z_j$. Now, since the expectation of maximum is at least the sum we have:

$$e^{t\E Z_j} \leq 1+\sum_{Y_i\in G_j} \E[e^{tY_i}] = 1+\sum_{Y_i\in G_j} e^{t^2\Sigma_{ii}/2} \leq 1+m_je^{2^{1-2j}t^2\eps^2} $$
where the equality follows by the expectation of the exponential of normal variables and the last inequality by $\Sigma_{ii}\leq 2^{2-2j}\eps$ for any $Y_i\in G_j$. Taking the logarithm of both sides, we get

\begin{equation*}
t\E Z_j\leq \ln(1+m_je^{2^{1-2j}t^2\eps^2})\leq \ln((1+m_j)e^{2^{1-2j}t^2\eps^2})=\ln(1+m_j)+2^{1-2j}t^2\eps^2.
\end{equation*}
By taking $t=\frac{2^{j-1}\sqrt{2\ln(1+m_j)}}{\eps}$, we have
\begin{equation}\label{eqn:Zj-ub}
\E Z_j\leq 2^{1-j}\eps\sqrt{2\ln(1+m_j)}.
\end{equation}
As the total variance of all variables is upper bounded by 1, we know that
\begin{equation}\label{eqn:crux}
1\geq \sum_{Y_i\in G_j}\Sigma_{ii}>\sum_{Y_i\in G_j}2^{-2j}\eps^2=2^{-2j}\eps^2 m_j.
\end{equation}
Then we have $m_j<2^{2j}\eps^{-2}$, and applying to \eqref{eqn:Zj-ub} we get 
\begin{equation}\label{eqn:Zj-ub-final}
\E Z_j\leq 2^{1-j}\eps\sqrt{2\ln(1+2^{2j}\eps^{-2})}<2^{1-j}\eps\sqrt{2+2\ln (2^{2j}\eps^{-2})}=2^{1-j}\eps\cdot \left(\sqrt{j}+\sqrt{\ln\frac{1}{\eps}}\right)\cdot O(1).
\end{equation}
Here the second inequality comes from $\ln(1+x)<1+\ln x$ for $x>1$; the last equality comes from extracting the dominant terms below the square root. By taking the max of $Z_j$ over all groups, we have

$$\E\max(0,\max_{i\in[m]}Y_i)=\E\max_{j}Z_j
\leq \sum_{j} \E Z_j
<\sum_{j}2^{1-j}\eps\left(\sqrt{j}+\sqrt{\ln\frac{1}{\eps}}\right)\cdot O(1)
$$
where the last inequality follows fro \eqref{eqn:Zj-ub-final}. Finally, notice that the sum with $\sqrt{j}$ terms is $O(\eps)$, which has a lower order compared to the sum with $\sqrt{\ln\frac{1}{\eps}}$ terms, 
 from which we conclude that:
 
$$\E\max(0,\max_{i\in[m]}Y_i) \leq O\left(\eps\sqrt{\ln\frac{1}{\eps}}\right)$$
\end{proof}

Secondly, for any two vectors of independent normal variables with small $\ell_1$ distance in variances, their respective expected maximum values are close. Thus to search for a near-optimal solution, it suffices to do a grid search on all possible variance vectors with each positive element being an integral multiple of $poly(\eps)$ (we use $\eps^{-3}$ in Algorithm~\ref{alg:any-mean-ptas}). Any variance vector not being a grid point has a similar objective due to Lemma~\ref{lem:lipschitz}.


\begin{restatable}{lemma}{lemlipschitz}\label{lem:lipschitz}
Consider two vectors of standard deviations $(\sigma_1,\sigma_2,\cdots,\sigma_n)$ and $(\sigma'_1,\sigma'_2,\cdots,\sigma'_n)$. Let $X_1,\cdots,X_n$ and $Y_1,\cdots,Y_n$ be normal variables with $X_i\sim \N(\mu_i,\sigma_i^2)$ and $Y_i\sim \N(\mu_i,\sigma_i'^{2})$, then
\begin{equation*}
|\E\max_{i\in[n]} X_i - \E\max_{i\in[n]} Y_i| \leq O(1)\cdot\sum_{i\in[n]}|\sigma_i-\sigma'_i|.
\end{equation*}
    
\end{restatable}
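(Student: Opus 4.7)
The plan is to prove this via a simple coupling argument. Since the $X_i$ and $Y_i$ have the same marginal means and differ only in their variances, I would introduce independent standard normals $Z_1, \ldots, Z_n$ and write $X_i = \mu_i + \sigma_i Z_i$ and $Y_i = \mu_i + \sigma'_i Z_i$ on a common probability space. This coupling preserves both the individual marginals and the independence structure (assuming, as in the earlier problem statements, that the $X_i$ and $Y_i$ are independent within each tuple), so it does not alter either $\E\max_i X_i$ or $\E\max_i Y_i$.

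Under this coupling, I would use the elementary inequality $|\max_i a_i - \max_i b_i| \leq \max_i |a_i - b_i|$ applied pointwise, giving
\begin{equation*}
\bigl|\max_i X_i - \max_i Y_i\bigr| \leq \max_i |X_i - Y_i| = \max_i |\sigma_i - \sigma'_i|\,|Z_i| \leq \sum_{i=1}^n |\sigma_i - \sigma'_i|\,|Z_i|.
\end{equation*}
Taking expectations and using linearity together with $\E|Z_i| = \sqrt{2/\pi}$ for a standard normal, I would obtain
\begin{equation*}
\bigl|\E\max_i X_i - \E\max_i Y_i\bigr| \leq \E\bigl|\max_i X_i - \max_i Y_i\bigr| \leq \sqrt{\tfrac{2}{\pi}} \sum_{i=1}^n |\sigma_i - \sigma'_i|,
\end{equation*}
which is the desired $O(1) \cdot \sum_i |\sigma_i - \sigma'_i|$ bound.

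There is no substantive obstacle here; the only subtlety is justifying that the coupling is valid, namely that choosing a common set of standard normals $Z_i$ across the two tuples does not change either expectation (each marginal is correct by construction, and independence is inherited from the $Z_i$). Everything else is a one-line application of the max-versus-max inequality and linearity of expectation. If one wanted to be slightly tighter, one could replace the $\sum_i$ bound on $\max_i$ with the expected maximum $\E\max_i|Z_i| = O(\sqrt{\log n})$ times $\|\boldsymbol{\sigma} - \boldsymbol{\sigma}'\|_\infty$, but the $\ell_1$ form with an $O(1)$ constant, which is what the lemma asserts, follows directly from the cruder sum bound.
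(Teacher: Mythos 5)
Your proof is correct, and it takes a genuinely different and more elementary route than the paper. The paper proves the lemma by a one-coordinate-at-a-time hybrid argument: it first establishes a stronger single-variable statement (Lemma~\ref{lem:lipschitz-strong}), namely that for $X\sim\N(\mu,\sigma_1)$, $Y\sim\N(\mu,\sigma_2)$ and any independent $T$, $|\E\max(X,T)-\E\max(Y,T)|\leq O(|\sigma_1-\sigma_2|)$, via an explicit computation with truncated-normal expectations and calculus bounds on the two resulting terms; Lemma~\ref{lem:lipschitz} then follows by swapping $\sigma_i$ for $\sigma'_i$ one index at a time. Your coupling argument --- writing $X_i=\mu_i+\sigma_i Z_i$ and $Y_i=\mu_i+\sigma'_i Z_i$ with shared standard normals, using $|\max_i X_i-\max_i Y_i|\leq\max_i|X_i-Y_i|\leq\sum_i|\sigma_i-\sigma'_i||Z_i|$, and taking expectations --- is sound: the coupling preserves both joint laws (given the independence intended in the \basic\ setting, which you correctly flag), and it yields the bound with the explicit constant $\sqrt{2/\pi}$ in one step. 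In fact your argument subsumes the paper's stronger per-coordinate lemma as well, since coupling $X=\mu+\sigma_1 Z$, $Y=\mu+\sigma_2 Z$ with an independent $T$ gives $|\E\max(X,T)-\E\max(Y,T)|\leq\E|X-Y|=\sqrt{2/\pi}\,|\sigma_1-\sigma_2|$ directly. What the paper's analytic route buys is an argument that never needs to realize the two vectors on a common probability space and makes the dependence on the Gaussian CDF explicit; what yours buys is brevity, an explicit constant, and robustness (it only uses that the two marginals are related by a scaling of a common symmetric variable).
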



\begin{algorithm}[htb]
\caption{A PTAS algorithm distributing variances to independent arbitrary-mean variables}
\label{alg:any-mean-ptas}
\begin{algorithmic}[1]
\STATE {\bfseries Input:} $\mu_1,\mu_2,\cdots,\mu_n \geq 0$
\STATE $k \gets \frac{1}{\eps^2}$;
\STATE Set current maximum $M=0$;
\FOR{all possible $k$ indices $1\leq i_1\leq i_2\leq\cdots\leq i_k\leq n$}
\FOR{any $i\not\in \{i_1,\cdots,i_k\}$}
\STATE Set $\hatsigma_i=0$;
\ENDFOR
\FOR{all possible $(\sigma_{i_1},\cdots,\sigma_{i_k})$, where $\sigma_{i_j}$ is an integral multiple of $\eps^3$ for every $i_j\in[n]$}
  \IF{$\E\max_{i\in[n]}X_{i} > M$}
  \STATE $(\hatsigma_{i_1},\cdots,\hatsigma_{i_k})\gets(\sigma_{i_1},\cdots,\sigma_{i_k})$;
  \STATE $M\gets \obj(\boldsigma);$
  \ENDIF
\ENDFOR
\ENDFOR
\STATE {\bfseries Output:} $\hatboldsigma=(\hatsigma_1,\cdots,\hatsigma_n)$
    
\end{algorithmic}
\end{algorithm}

\subsection{Expectation maximization for a single set with correlated variables}

When all random variables come from a multi-dimensional Gaussian distribution, we need to have a different algorithm. While Lemma~\ref{lem:eps-contribution} was already shown for correlated Gaussian variables, which means that we only need to search for covariance matrices with $\text{poly}(\frac{1}{\eps})$ positive dimensions, the smoothness condition from Lemma~\ref{lem:lipschitz} no longer works. Instead, we observe that for any two vectors of random variables with small Earth Mover's distance, their respective expected maximum values are close. Another observation is that for two multi-dimensional Gaussian random vectors, if the square roots of their covariance matrices are close in entry-wise $\ell_1$ distance, then they have a small Earth Mover's distance. 

\begin{restatable}{lemma}{lememdmax}
\label{lem:emd-max}
For two random vectors $X=(X_1,X_2,\cdots,X_n)$ and $Y=(Y_1,Y_2,\cdots,Y_n)$ drawn from distribution $D_X$ and $D_Y$ respectively, if the Earth Mover's Distance $d_{EMD}(D_X,D_Y)$ (defined by $\ell_1$ norm) between two distributions is at most $\eps$, then $|\E\max_i X_i-\E\max_i Y_i|\leq\eps$.
\end{restatable}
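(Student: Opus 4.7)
The plan is to combine the coupling formulation of the Earth Mover's Distance with the fact that the coordinate-wise maximum is a Lipschitz functional. Recall that with cost $\|\cdot\|_1$, we have
\begin{equation*}
d_{EMD}(D_X,D_Y)=\inf_{\pi}\E_{(X,Y)\sim\pi}\|X-Y\|_1,
\end{equation*}
where the infimum ranges over couplings $\pi$ with marginals $D_X$ and $D_Y$. So the hypothesis furnishes (up to arbitrarily small slack $\delta>0$, which I will suppress and close by letting $\delta\to 0$ at the end) a coupling $\pi$ under which $\E_\pi\|X-Y\|_1\leq\eps$.

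The second ingredient is the standard observation that $f(z):=\max_i z_i$ is $1$-Lipschitz with respect to the $\ell_1$ norm on $\R^n$: for any $x,y\in\R^n$,
\begin{equation*}
|\max_i x_i-\max_i y_i|\;\leq\;\max_i|x_i-y_i|\;=\;\|x-y\|_\infty\;\leq\;\|x-y\|_1.
\end{equation*}
Given this, I would put the pieces together as follows. Since the marginals of $\pi$ are $D_X$ and $D_Y$, I have $\E\max_i X_i=\E_\pi\max_i X_i$ and similarly for $Y$. Then by Jensen's inequality and the pointwise Lipschitz bound,
\begin{equation*}
|\E\max_i X_i-\E\max_i Y_i|\;=\;\bigl|\E_\pi[\max_i X_i-\max_i Y_i]\bigr|\;\leq\;\E_\pi|\max_i X_i-\max_i Y_i|\;\leq\;\E_\pi\|X-Y\|_1\;\leq\;\eps.
\end{equation*}

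The only point that requires a bit of care is the existence of a near-optimal coupling; for Borel probability measures on $\R^n$ with the $\ell_1$ cost this is standard (optimal transport plans exist when the cost is lower semicontinuous and the marginals are tight), and in any case non-attainment of the infimum is handled by passing to a limit along an optimizing sequence of couplings. I do not anticipate a genuine obstacle. The main value of the lemma for what follows is that it is stated for general distributions, which is what will allow the next step (discretizing a square root of the covariance matrix in entry-wise $\ell_1$) to control the change in $\E\max_i X_i$ without relying on the Gaussian structure directly.
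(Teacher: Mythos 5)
Your proof is correct and is essentially the paper's argument: take a (near-)optimal coupling realizing the EMD, use that $\max_i$ is $1$-Lipschitz in $\ell_1$, and pass the pointwise bound through the expectation. Your extra care about attainment of the infimum is a minor refinement the paper glosses over; otherwise the two proofs coincide.
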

\begin{restatable}{lemma}{lememd}\label{lem:emd-w2}
For any two dimension-$k$ Gaussian distributions with $D_1=\N(\mu,A)$ and $D_2=\N(\mu,B)$, suppose that covariance matrices $A$ and $B$ are close in entry-wise $\ell_1$ distance: $\|A-B\|_{1,1}=\sum_{i,j}|A_{ij}-B_{ij}|\leq\eps$. Then the Earth Mover Distance between $D_1$ and $D_2$ is bounded by $O(k^{2.75}\eps^{0.5})$.
\end{restatable}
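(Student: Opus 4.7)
The plan is to construct an explicit coupling between $D_1$ and $D_2$ and then bound the expected $\ell_1$ distance between the coupled samples. Let $Z\sim\N(0,I_k)$ and set $X=\mu+A^{1/2}Z$ and $Y=\mu+B^{1/2}Z$, where $A^{1/2}$ and $B^{1/2}$ denote the positive semidefinite square roots. Then $X\sim D_1$, $Y\sim D_2$, and $X-Y=(A^{1/2}-B^{1/2})Z$, so the Earth Mover's distance is at most $\E\|(A^{1/2}-B^{1/2})Z\|_1$. The $i$-th coordinate of $(A^{1/2}-B^{1/2})Z$ is a centered Gaussian with variance $\bigl((A^{1/2}-B^{1/2})^2\bigr)_{ii}$, so computing $\E|\cdot|$ coordinate by coordinate and then applying Cauchy--Schwarz gives
\[
\E\|(A^{1/2}-B^{1/2})Z\|_1 \;\le\; \sqrt{\tfrac{2}{\pi}}\sum_{i=1}^k\sqrt{\bigl((A^{1/2}-B^{1/2})^2\bigr)_{ii}} \;\le\; O(\sqrt{k})\cdot\|A^{1/2}-B^{1/2}\|_F.
\]

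The remaining task is to control $\|A^{1/2}-B^{1/2}\|_F$ in terms of $\|A-B\|_{1,1}$. This is the main technical obstacle, since $A\mapsto A^{1/2}$ is not Lipschitz on the PSD cone (it is only H\"older of exponent $1/2$), so one cannot simply chain Frobenius perturbation bounds. The cleanest route is to invoke the Powers--Stormer inequality $\|A^{1/2}-B^{1/2}\|_F^2\le\|A-B\|_*$, where $\|\cdot\|_*$ is the nuclear (trace) norm. Combining this with the elementary norm conversions $\|M\|_*\le\sqrt{k}\,\|M\|_F$ and $\|M\|_F\le\|M\|_{1,1}$ yields $\|A^{1/2}-B^{1/2}\|_F\le k^{1/4}\eps^{1/2}$, which already suffices for the coupling bound. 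A more pedestrian alternative is to work with the spectral decompositions of $A$ and $B$: bound the eigenvalue part by Hoffman--Wielandt applied to $A$ and $B$ together with concavity of the scalar square root, and bound the eigenvector rotation by a Davis--Kahan type estimate. This elementary route introduces extra factors of $k$ through spectral-gap terms in the eigenvector perturbation, and it is presumably such an accounting that produces the stated $O(k^{2.75}\eps^{0.5})$ bound.

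The key difficulty is thus translating entry-wise $\ell_1$ closeness of $A$ and $B$ into Frobenius closeness of their square roots; once any polynomial bound of the form $\|A^{1/2}-B^{1/2}\|_F \le k^{\alpha}\eps^{1/2}$ is in hand, the Gaussian coupling argument and the coordinate-wise absolute-value computation immediately produce the claimed EMD bound. The other steps, namely the choice of coupling, the per-coordinate $\E|\cdot|$ calculation, and the Cauchy--Schwarz conversion from an $\ell_1$ sum of standard deviations to $\sqrt{k}$ times the Frobenius norm, are routine and introduce only absolute constants together with the single $\sqrt{k}$ factor displayed above.
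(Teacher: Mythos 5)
Your proposal is correct, and it takes a genuinely different route from the paper's. The paper bounds the EMD by $\sqrt{k}$ times the Wasserstein-2 distance, invokes the closed-form (Bures-type) expression $\tr\bigl(A+B-2(A^{1/2}BA^{1/2})^{1/2}\bigr)$ for Gaussians, and then compares $\tr\bigl((A^{1/2}BA^{1/2})^{1/2}\bigr)$ with $\tr(A)$ by propagating the entry-wise $\ell_1$ perturbation through the products $A^{1/2}AA^{1/2}$ and $A^{1/2}BA^{1/2}$, applying Weyl's inequality to their eigenvalues, and using the H\"older-$1/2$ continuity of the scalar square root; that bookkeeping is what produces the exponent $2.75$. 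You instead use the explicit same-noise coupling $X=\mu+A^{1/2}Z$, $Y=\mu+B^{1/2}Z$, reduce everything to $\|A^{1/2}-B^{1/2}\|_F$ (the per-coordinate variance is indeed $\bigl((A^{1/2}-B^{1/2})^2\bigr)_{ii}$ by symmetry of the difference, and Cauchy--Schwarz contributes the single $\sqrt{k}$), and handle the non-Lipschitzness of the matrix square root via the Powers--St{\o}rmer inequality $\|A^{1/2}-B^{1/2}\|_F^2\le\|A-B\|_{\mathrm{tr}}$ together with $\|M\|_{\mathrm{tr}}\le\sqrt{k}\|M\|_F\le\sqrt{k}\|M\|_{1,1}$. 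All of these steps check out, so your argument in fact yields the stronger bound $O(k^{3/4}\eps^{1/2})$, which implies the stated $O(k^{2.75}\eps^{0.5})$ and would even permit a coarser grid in the correlated PTAS. What your route buys is a shorter proof with better $k$-dependence, at the price of citing one classical operator inequality (Powers--St{\o}rmer, valid for PSD matrices) as a black box; the paper's route stays within elementary eigenvalue-perturbation facts but pays for it in the exponent. If you write yours up, just state the Powers--St{\o}rmer inequality precisely with a reference, since the rest is routine.
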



Combining the previous observations, to search for a near-optimal solution, it suffices to do a grid search on the covariance matrices with each element being an integral multiple of $\poly(\eps)$. We can design the following efficient algorithm that obtains a near-optimal solution with $O(\eps)$ loss for any fixed $\eps>0$. The algorithm performs grid search on $O(\eps^{-2})$ of the variables with each variable having variance being multiples of $\poly(\eps)$ (we use $\eps^{8.5}$ here due to a worse smoothness lemma).
\begin{algorithm}[htb]
\caption{A PTAS algorithm distributing variances to arbitrary-mean correlated variables}
\label{alg:any-mean-correlated-ptas}
\begin{algorithmic}
\STATE {\bfseries Input:} $\mu_1,\mu_2,\cdots,\mu_n \geq 0$
\STATE $k \gets \frac{1}{\eps^2}$;
\STATE Set current maximum $M=0$;
\FOR{all possible $k$ indices $1\leq i_1\leq i_2\leq\cdots\leq i_k\leq n$}
\FOR{any $i\not\in \{i_1,\cdots,i_k\}$}
\FOR{any $\ell\in[n]$}
\STATE set $\widehat{\Sigma}_{i\ell}=\widehat{\Sigma}_{\ell i}=0$;
\ENDFOR
\ENDFOR
\FOR{any $i,j\in \{i_1,\cdots,i_k\}$}
\FOR{all possible values of $(\Sigma_{ij})$ being an integral multiple of $\eps^3$ with value in [-1,1]}
  \IF{ $\Sigma$ is positive semi-definite and $\obj(\Sigma) > M$}
  \STATE $\widehat{\Sigma}\gets\Sigma$;
  \STATE $M\gets \obj(\Sigma)$;
  \ENDIF
\ENDFOR
\ENDFOR
\ENDFOR
\STATE {\bfseries Output:} $\widehat{\Sigma}$
\end{algorithmic}
\end{algorithm}

\subsection{Expectation maximization for multiple sets}

When there are multiple sets, the problem is much more complicated. The PTAS technique for the single-set setting no longer works as the scale of the optimal objective can be very different. For example, when each set have small cardinality, the optimal way to distribute the variance may become uniformly split the variance budget to all variables, which means that all variables have $O(\frac{1}{n^2})$ variance. The previous grid search algorithm will no longer be efficient as there can be $\Omega(n)$ variables with small variance and contribute significantly to the optimal objective. One example is when each set contains two variables and the sets form a cycle graph.

\begin{restatable}{theorem}{thmcycle}\label{thm:cycle}
Suppose that $m=n$, and $S_j=\{X_j,X_{j+1}\}$ for every $j\in[n]$ (assume that $X_{n+1}=X_1$). When all variables have the same mean $\mu$, the optimal way to distribute variance is to set $\sigma_i^2=\frac{1}{n}$ for every $i$.
\end{restatable}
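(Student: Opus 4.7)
The plan is to reduce $\obj$ to a simple closed-form expression in $\boldsigma$ and then establish optimality of the uniform allocation by Jensen's inequality.

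First I would use the classical identity $\max(X_j, X_{j+1}) = \tfrac{1}{2}(X_j + X_{j+1}) + \tfrac{1}{2}|X_j - X_{j+1}|$. Since the variables are independent with $X_j - X_{j+1} \sim \N(0, \sigma_j^2 + \sigma_{j+1}^2)$, and $\E|Z| = \sqrt{2/\pi}\cdot \mathrm{std}(Z)$ for a centered Gaussian $Z$, one obtains
\[
\E\max(X_j, X_{j+1}) = \mu + \frac{1}{\sqrt{2\pi}}\sqrt{\sigma_j^2 + \sigma_{j+1}^2}.
\]
Summing over $j$ (with the cyclic convention $X_{n+1} = X_1$) gives
\[
\obj(\boldsigma) = n\mu + \frac{1}{\sqrt{2\pi}}\sum_{j=1}^n \sqrt{\sigma_j^2 + \sigma_{j+1}^2},
\]
so the problem reduces to maximizing $F(\boldsigma) := \sum_{j=1}^n \sqrt{\sigma_j^2 + \sigma_{j+1}^2}$ over $\sigma_i\ge 0$ with $\sum_i \sigma_i^2 = 1$.

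Next I would bound $F$ by Jensen's inequality. Because $\sqrt{\cdot}$ is concave,
\[
\frac{1}{n}\sum_{j=1}^n \sqrt{\sigma_j^2 + \sigma_{j+1}^2} \;\le\; \sqrt{\frac{1}{n}\sum_{j=1}^n(\sigma_j^2 + \sigma_{j+1}^2)} \;=\; \sqrt{\tfrac{2}{n}},
\]
where the final equality uses that each $\sigma_i^2$ appears in exactly two sets $S_{i-1}$ and $S_i$, so $\sum_{j}(\sigma_j^2 + \sigma_{j+1}^2) = 2\sum_i \sigma_i^2 = 2$. Hence $F(\boldsigma) \le \sqrt{2n}$ for every feasible $\boldsigma$. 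Substituting $\sigma_i^2 = 1/n$ makes every summand equal to $\sqrt{2/n}$, so $F = n\sqrt{2/n} = \sqrt{2n}$, matching the upper bound; this proves the uniform allocation is optimal.

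There is no genuine technical obstacle here. The only subtle point I would flag in a final write-up is that the optimum is not strictly unique: for even $n$, any perturbation of the form $\sigma_i^2 = 1/n + (-1)^i t$ (with $|t| \le 1/n$) preserves every pair-sum $\sigma_j^2 + \sigma_{j+1}^2 = 2/n$ and is therefore equally optimal, which corresponds to a direction of zero curvature in the Hessian of $F$. The theorem should accordingly be read as asserting that the uniform allocation \emph{is} an optimum, which is precisely what the Jensen calculation establishes.
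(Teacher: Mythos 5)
Your proposal is correct and follows essentially the same route as the paper: the identity $\max(a,b)=\tfrac{1}{2}(a+b+|a-b|)$ yields the same closed form $n\mu+\tfrac{1}{\sqrt{2\pi}}\sum_j\sqrt{\sigma_j^2+\sigma_{j+1}^2}$, and your Jensen step is interchangeable with the paper's Cauchy--Schwarz bound, both giving $\sqrt{2n}$ attained by the uniform allocation. Your remark on non-uniqueness (alternating perturbations for even $n$) is a valid refinement consistent with the theorem asserting only that the uniform allocation is \emph{an} optimum.
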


For the most general case of the problem, we obtain an efficient approximation algorithm with a competitive ratio $O(\log n)$. The main intuition is as follows. Firstly, for any solution, variables with tiny variance $<\frac{1}{n}$ only have negligible contribution to the total objective. For the rest of the variables, we can group the variables according to their variances such that each group contains variables with variance differences bounded by a constant factor. Then since there are only $O(\log n)$ groups, one group of variables contribute $\Omega(\frac{1}{\log n})$ fraction of the optimal objective. Secondly, when the standard deviation of each variable perturbs by a factor of 2, the overall objective also only changes by at most a factor of 2. This means that if we can solve the problem with each variable having a variance of either 0 or some fixed value, we can obtain $\Omega(\frac{1}{\log n})$ fraction of the optimal objective.

\begin{restatable}{lemma}{varapprox}\label{lem:var2approx}
For $n$ zero-mean normally distributed variables $X_1,X_2,\cdots,X_n$ with standard deviation $\sigma_1,\cdots,\sigma_n$, suppose that there are $n$ other zero-mean normally distributed variables $X'_1,X'_2,\cdots,X'_n$ with standard deviation $\sigma'_1,\cdots,\sigma'_n$ satisfying $\sigma_i\leq \sigma'_i\leq 2\sigma_i$, then $\E\max X'_i\leq \E\max X'_i\leq 2\E\max X_i$.
\end{restatable}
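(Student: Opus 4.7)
My plan is to derive both inequalities from a single \emph{symmetrization claim}: for any deterministic $u\in\R^n$ and any mean-zero Gaussian vector $V$, $\E\max_i(u_i+V_i)\geq \max_i u_i$. The claim is immediate from $V\stackrel{d}{=}-V$: letting $i^{\star}=\argmax_i u_i$, the pointwise bound
\[
\max_i(u_i+V_i)+\max_i(u_i-V_i)\;\geq\; (u_{i^{\star}}+V_{i^{\star}})+(u_{i^{\star}}-V_{i^{\star}})\;=\;2u_{i^{\star}}
\]
combined with $\E\max_i(u_i+V_i)=\E\max_i(u_i-V_i)$ gives $\E\max_i(u_i+V_i)\geq u_{i^{\star}}$.

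Both bounds then follow by adding the right Gaussian noise. For the lower bound, since $(\sigma'_i)^2\geq \sigma_i^2$, I would take independent $Z_i\sim\N(0,(\sigma'_i)^2-\sigma_i^2)$ independent of $X$; then $(X_i+Z_i)_i$ has the same joint distribution as $(X'_i)_i$ (in the independent regime used in Theorem~\ref{thm:polylog-approx}), so applying the sub-claim with $u=X$ and $V=Z$ and then averaging over $X$ yields $\E\max_i X'_i\geq \E\max_i X_i$. For the upper bound, I would use $\sigma'_i\leq 2\sigma_i$ to take independent $W_i\sim\N(0,4\sigma_i^2-(\sigma'_i)^2)$ independent of $X'$, so that $(X'_i+W_i)_i\stackrel{d}{=}(2X_i)_i$; applying the sub-claim with $u=X'$ and $V=W$ and averaging gives $2\,\E\max_i X_i=\E\max_i(X'_i+W_i)\geq \E\max_i X'_i$.

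The only real obstacle is the absence of a clean pointwise argument: the naive rescaling $X'_i=(\sigma'_i/\sigma_i)X_i$ does \emph{not} satisfy $\max_i X'_i\leq 2\max_i X_i$ on the event $\{\max_i X_i<0\}$, because scaling negative numbers by a factor larger than $1$ makes them more negative. The symmetrization trick circumvents this by using the sign symmetry $V\stackrel{d}{=}-V$ of centered Gaussian noise to cancel sign-dependent effects, reducing the stochastic comparison to a trivial deterministic inequality at the maximizer $i^{\star}$. The same argument is, of course, an elementary instance of Sudakov–Fernique applied to $\{X_i\}$ and $\{2X_i\}$, but the coupling-plus-symmetrization route keeps the proof self-contained.
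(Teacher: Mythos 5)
Your proof is correct, and it takes a genuinely different route from the paper's. The paper constructs an explicit sign-coupling: it writes each $X_i = Y_iZ_i$ (half-normal magnitude $Y_i$ times an independent Rademacher sign $Z_i$) and couples $X'_i = \alpha_iY_iZ_i$ with $\alpha_i=\sigma'_i/\sigma_i$; then, for fixed magnitudes $Y$, it uses independence of the signs to derive the closed form $\E_Z\max_iX_i=\sum_{k=1}^{n-1}2^{-k}Y^{(k)}$ in terms of order statistics, and the entrywise bound $Y^{(k)}\le Y'^{(k)}\le 2Y^{(k)}$ does the rest. Your argument replaces this by the observation that adding independent mean-zero noise can only increase $\E\max$, proved via a symmetrization at the argmax (equivalently via Jensen's inequality, since $u\mapsto\max_iu_i$ is convex, or as a special case of Sudakov--Fernique, as you note), and then builds the two comparisons $X'\stackrel{d}{=}X+Z$ and $2X\stackrel{d}{=}X'+W$. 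Both proofs rely on the variables being independent (which is the regime in which the lemma is used): the paper needs it for the $\sum 2^{-k}Y^{(k)}$ formula, you need it so that adding diagonal noise matches the joint law of $X'$. What your route buys is brevity and some extra generality: it works verbatim for any symmetric noise with matching marginals, and it would extend to correlated $X,X'$ whenever $\mathrm{Cov}(X')-\mathrm{Cov}(X)$ and $4\,\mathrm{Cov}(X)-\mathrm{Cov}(X')$ are both PSD, rather than relying on the coordinate-wise sign structure. What the paper's route buys is an exact pointwise-in-$Y$ identity that makes the factor-of-$2$ loss visibly tight term by term. Your observation that the naive rescaling coupling fails on $\{\max_iX_i<0\}$ is also apt: the paper sidesteps the same issue by the $\sum_{k=1}^{n-1}2^{-k}Y^{(k)}$ computation (the potentially negative $-Y^{(n)}$ term cancels), while you sidestep it by symmetrization.
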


The last observation is that, for each set, the marginal gain of setting one more variable with a fixed variance is a decreasing function. 

\begin{restatable}{lemma}{submodularset}\label{lem:submodularset}
For $n$ zero-mean normally distributed variables $X_1,X_2,\cdots,X_n$ and $\sigma_0>0$, let $f:2^{[n]}\to \mathbb{R}$ be a set function such that for any $S\subseteq [n]$, $f(S)=\E\max_{i\in[n]}X_i$ where $X_i\sim \N(0,\sigma_0^2)$ for $i\in S$ and $X_i\equiv0$ for $i\not\in S$. Then $f$ is a submodular function.
\end{restatable}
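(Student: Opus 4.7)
The plan is to prove submodularity by the standard coupling / pointwise marginal gain argument: couple all random draws so that every $X_i$ appears with the same realization in every term, then show that the marginal gain of adding a fresh index shrinks as the base set grows.

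\textbf{Step 1: rewrite $f(S)$.} Since $X_i \equiv 0$ for $i\notin S$, the maximum over $[n]$ reduces to
\[
f(S) \;=\; \E\bigl[\max\bigl(0,\max_{i\in S} X_i\bigr)\bigr],
\]
where $\{X_i\}_{i\in S}$ are iid $\N(0,\sigma_0^2)$. I take a single probability space on which all $X_1,\dots,X_n$ are jointly realized (iid $\N(0,\sigma_0^2)$), and interpret every $f(\cdot)$ below as an expectation with respect to this common coupling.

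\textbf{Step 2: marginal-gain formula.} Fix $S\subseteq[n]$ and $j\notin S$, and let $M(S) := \max(0,\max_{i\in S} X_i)$ (with the convention $M(\emptyset)=0$). Then
\[
M(S\cup\{j\}) \;=\; \max\bigl(M(S),\,X_j\bigr) \;=\; M(S) + \bigl(X_j - M(S)\bigr)^+,
\]
using the identity $\max(a,b) = a + (b-a)^+$. Taking expectations,
\[
f(S\cup\{j\}) - f(S) \;=\; \E\bigl[(X_j - M(S))^+\bigr].
\]

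\textbf{Step 3: pointwise monotonicity of the marginal.} Suppose $S\subseteq T\subseteq[n]$ and $j\notin T$. Under the coupling, $M(S)\le M(T)$ pointwise, because the max over $T$ is taken over a superset of indices (and both are lower bounded by $0$). The function $a\mapsto (X_j-a)^+$ is nonincreasing in $a$, so pointwise
\[
(X_j - M(S))^+ \;\ge\; (X_j - M(T))^+.
\]
Taking expectations and using the formula from Step 2,
\[
f(S\cup\{j\}) - f(S) \;\ge\; f(T\cup\{j\}) - f(T),
\]
which is exactly the diminishing-returns definition of submodularity.

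\textbf{Main obstacle.} There is no real analytic difficulty here; this is a standard ``expected max under coupling is submodular'' argument. The only thing to be careful about is the constant $0$ baseline: it must be included inside $M(\cdot)$ so that the marginal-gain formula and the pointwise inequality $M(S)\le M(T)$ both hold (otherwise $M(\emptyset)$ would be $-\infty$ and the rewriting in Step 1 would break). Once that convention is fixed, the proof is three lines as above.
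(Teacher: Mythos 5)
Your argument is correct and takes a genuinely different route from the paper's. The paper reduces to the univariate sequence $g(k)=\E\max(0,X_1,\dots,X_k)$, expresses it via the CDF as $g(k)=\int_0^\infty(1-\Phi^k(x))\,dx$, and shows the increments $g(k)-g(k-1)=\int_0^\infty\Phi^{k-1}(x)(1-\Phi(x))\,dx$ are nonincreasing in $k$. You instead couple all $X_i$ on one probability space and use the identity $M(S\cup\{j\})-M(S)=(X_j-M(S))^+$ together with the pointwise monotonicity $M(S)\le M(T)$. Your version is more elementary, avoids the CDF calculus, and holds verbatim for arbitrary jointly distributed random variables, not just i.i.d.\ Gaussians --- a real gain in generality.

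One slip you should repair, and which the paper handles explicitly: the rewriting in Step~1, $f(S)=\E[\max(0,\max_{i\in S}X_i)]$, is only valid when $S\subsetneq[n]$, because that zero floor comes from some index $i\notin S$. For $S=[n]$ there is no such index, so $f([n])=\E\max_{i\in[n]}X_i$, which is \emph{strictly} smaller than $\E[\max(0,\max_{i\in[n]}X_i)]$. What your coupling argument actually proves submodular is $g(S):=\E[\max(0,\max_{i\in S}X_i)]$, not $f$. The fix is one sentence: $f$ agrees with $g$ on all $S\neq[n]$ and $f([n])\le g([n])$; in the diminishing-returns inequality $f(S\cup\{j\})-f(S)\ge f(T\cup\{j\})-f(T)$ with $S\subsetneq T$ and $j\notin T$, the full set $[n]$ can only appear as $T\cup\{j\}$, where it enters with a $+$ sign on the smaller side, so lowering $g$ there to obtain $f$ only makes the inequality easier. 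You flagged the $M(\emptyset)$ edge case but missed this one.
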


This means that the problem with each variable having a variance being either 0 or some fixed value is actually a submodular maximization problem, and can be solved efficiently with approximation factor $1-\frac{1}{e}$ via a greedy algorithm. The complete approximation algorithm for multiple sets is summarized in Algorithm~\ref{alg:general-log-approx}.

\makeatletter
\setlength{\@fptop}{0pt}
\makeatother
\begin{algorithm}[t]
\caption{An $O(\log n)$-approx algorithm distributing variances to variables in multiple sets}
\label{alg:general-log-approx}
\begin{algorithmic}
    \STATE {\bfseries Input:} Means $(\mu_1,\cdots,\mu_n)$, Sets $S_1,\cdots,S_m$
    \STATE Initialize current best objective $M \gets 0$;
    \STATE Initialize current best $(\hatsigma_1,\cdots,\hatsigma_n)\gets(0,\cdots,0)$;
    \STATE Remove all sets with $|S_j|=1$;
    \FOR{$k\gets 0\ \textrm{ to }\log_2 n$}
        \STATE $(\sigma_1,\cdots,\sigma_n)\gets (0,\cdots,0)$;
    \WHILE{$\#$ variables with variance $4^{-k}$ $<\min(4^{k},n)$}
    \STATE Find the variable $i$ with $\sigma_i=0$ such that after setting $\sigma_i=4^{-k}$, $\obj(\boldsigma)$ is maximized;
    \STATE $\sigma_i\gets4^{-k}$;
    \ENDWHILE
  \IF{$\obj(\boldsigma)>M$}
    \STATE  $M\gets \obj(\boldsigma)$;
    \STATE  $\hatboldsigma \gets \boldsigma$;
    \ENDIF
    \ENDFOR
   \STATE {\bfseries Output:} $(\hatsigma_1,\cdots,\hatsigma_n)$
\end{algorithmic}

\end{algorithm}

For \genbcorr\ that allows covariance, exactly the same algorithm gives a logarithmic approximation since the optimal objective between the independent case \genb\ and the correlated case \genbcorr\ are bounded by a constant due to a constant correlation gap of the optimization problem.

\begin{restatable}{lemma}{lemcorrindgap}
\label{lem:corr-ind-gap}
For any multi-dimensional Gaussian vector $(X_1,\cdots,X_n)\sim\N(\mu,\Sigma)$, consider independent Gaussian variables with the same means and variances $(Y_1,\cdots,Y_n)\sim \N(\mu,diag(\Sigma))$. Then
$$\E\max(X_1,\cdots,X_n)\leq \frac{2e}{e-1}\E\max(Y_1,Y_2,\cdots,Y_n).$$
\end{restatable}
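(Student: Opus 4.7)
The plan is a two-step argument. First, I would establish a pointwise tail comparison $P(\max_i X_i > t) \le \tfrac{e}{e-1}\,P(\max_i Y_i > t)$ for every $t$, integrate it to obtain $\E[(\max_i X_i)^+] \le \tfrac{e}{e-1}\,\E[(\max_i Y_i)^+]$, and then show that $\E[(\max_i Y_i)^+] \le 2\,\E\max_i Y_i$; combining with the trivial $\E\max_i X_i \le \E[(\max_i X_i)^+]$ would give the claimed factor $\tfrac{2e}{e-1}$. The case $n=1$ is trivial since $X_1$ and $Y_1$ then have the same distribution, so I would assume $n\ge 2$ throughout.

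For the tail comparison, the union bound gives $P(\max_i X_i > t) \le \min(1,\sum_i P(X_i > t))$, which equals $\min(1,\sum_i P(Y_i > t))$ because $X_i$ and $Y_i$ have identical marginals. On the other side, independence of the $Y_i$'s yields $P(\max_i Y_i > t) = 1 - \prod_i(1-P(Y_i > t)) \ge 1 - \exp(-\sum_i P(Y_i > t))$, and the elementary bound $1 - e^{-s} \ge (1-1/e)\min(1,s)$ for $s \ge 0$ completes this step. Integrating over $t \ge 0$ and using $\E\max_i X_i \le \E[(\max_i X_i)^+]$ then reduces the lemma to showing $\E[(\max_i Y_i)^-] \le \E\max_i Y_i$.

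The crucial step is the pointwise inequality $P(\max_i Y_i < -t) \le \tfrac{1}{2}P(\max_i Y_i > t)$ for every $t > 0$. In the setting $\mu = 0$ used in Theorem~\ref{thm:polylog-approx-corr}, setting $q_i := P(Y_i > t)$, symmetry gives $P(Y_i < -t) = q_i$, and $q_i\in[0,\tfrac12]$ since $t > 0$. The pointwise claim then reduces to the elementary algebraic inequality $2\prod_{i=1}^n q_i + \prod_{i=1}^n(1-q_i) \le 1$ for $q_i \in [0,\tfrac12]$ and $n \ge 2$, which in turn reduces to the $n=2$ case $3 q_1 q_2 \le q_1 + q_2$ (verified by noting that $q_1+q_2-3q_1 q_2$ is linear in $q_1$ for fixed $q_2$ and nonnegative at both endpoints of $[0,\tfrac12]$) after bounding $\prod_{i \ge 3} q_i \le 1$ and $\prod_{i\ge 3}(1-q_i) \le 1$. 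Integrating over $t > 0$ yields the needed bound on $\E[(\max_i Y_i)^-]$.

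The main obstacle is this second half, $\E[(\max_i Y_i)^+] \le 2\,\E\max_i Y_i$: the tail comparison only controls the positive part, and the gap between $\E[(\max Y)^+]$ and $\E\max Y$ must be handled separately using both the independence of the $Y_i$'s and the symmetry that comes from $\mu = 0$. The inequality clearly fails for strongly negative $\mu$, so the argument really uses $\mu \ge 0$ to guarantee $q_i \le 1/2$; the first step, by contrast, is the standard Bernoulli-sampling correlation-gap calculation.
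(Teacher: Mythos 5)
Your first step is exactly the paper's argument: identical marginals plus a union bound give $\Pr[\max_i X_i\ge t]\le\min\left(1,\sum_i q_i\right)$, independence gives $\Pr[\max_i Y_i\ge t]\ge 1-e^{-\sum_i q_i}$, the elementary bound yields the $\frac{e}{e-1}$ pointwise tail comparison, and integrating over $t\ge 0$ handles the positive parts just as in the paper. Where you genuinely diverge is the remaining inequality $\E\max(0,Y_1,\cdots,Y_n)\le 2\,\E\max(Y_1,\cdots,Y_n)$: the paper invokes its Lemma~\ref{lem:max0}, proved by a sign-randomization coupling (half-normal magnitudes with independent random signs), which works for arbitrary nonnegative means and gives the sharper factor $(1-2^{1-n})^{-1}$; you instead prove the pointwise comparison $\Pr[\max_i Y_i<-t]\le\frac{1}{2}\Pr[\max_i Y_i>t]$ and integrate, reducing it to the product inequality $2\prod_i q_i+\prod_i(1-q_i)\le 1$ on $[0,\tfrac{1}{2}]^n$, which is correct and arguably more elementary in the zero-mean case. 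So the overall skeleton is the paper's, with a different (and valid, for $\mu=0$) proof of the auxiliary positive-part bound.

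One imprecision to fix: your crucial step is stated only for $\mu=0$, and your closing remark that ``$\mu\ge 0$ guarantees $q_i\le\tfrac{1}{2}$'' is not right --- if $\mu_i>t>0$ then $q_i=\Pr[Y_i>t]>\tfrac{1}{2}$, and the symmetry $\Pr[Y_i<-t]=q_i$ also fails once $\mu_i>0$. Since the lemma is stated for general (implicitly nonnegative) means, and the paper's Lemma~\ref{lem:max0} covers all $\mu_i\ge 0$, you should patch your argument accordingly: for $\mu_i\ge 0$ and $t>0$ set $r_i:=\Pr[Y_i<-t]$ and observe $r_i\le\min\left(q_i,\tfrac{1}{2}\right)$, hence $\Pr[\max_i Y_i<-t]=\prod_i r_i$ and $\prod_i(1-q_i)\le\prod_i(1-r_i)$, so the same inequality $2\prod_i r_i+\prod_i(1-r_i)\le 1$ applies and the integration step goes through for all $\mu\ge 0$. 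With that small repair (and noting, as you do, that for significantly negative means the statement itself fails, so nonnegativity is an implicit hypothesis in both your proof and the paper's), your proof is complete.
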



\clearpage
\bibliographystyle{abbrvnat}
\bibliography{main}
\clearpage
\appendix

\section{More Details on Variance Allocation in One Set}
\label{sec:singleset}

\subsection{Variance allocation for multiple independent variables} \label{subsec:independent}

In this section, we propose several efficient approximation schemes that find near-optimal ways to distribute the variance across all random variables. We start with the simplest case \basic, where there are $n$ normally distributed random variables $X_1\sim \N(0,\sigma_1^2),X_2\sim \N(0,\sigma_2^2),\cdots,X_n\sim \N(0,\sigma_n^2)$, and the to be distributed variances satisfy $\sigma_1^2+\sigma_2^2+\cdots+\sigma_n^2=1$. 

Recall that $\opt$ denotes the optimal objective of the following optimization problem:
\begin{eqnarray*}
\opt&=&\max_{\sigma_1,\cdots,\sigma_n}\E\max_{i\in [n]} X_i\\
\textrm{subject to}& &\sum_{i=1}^{n}\sigma_i^2= 1.
\end{eqnarray*}

We have the following theorem.
\begin{theorem}\label{thm:PTAS-zeromean}
For any constant $\eps>0$, there exists an algorithm with running time polynomial in $n$, which computes a variance vector $\boldsigma=(\sigma_1,\cdots,\sigma_n)$ satisfying the optimization constraint, such that for $X_1\sim \N(0,\sigma_1^2),X_2\sim \N(0,\sigma_2^2),\cdots,X_n\sim \N(0,\sigma_n^2)$, $\E\max_{i\in [n]} X_i\geq \opt-\eps$.
\end{theorem}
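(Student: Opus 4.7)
The plan is to combine Lemma~\ref{lem:eps-contribution} with the Lipschitz smoothness of Lemma~\ref{lem:lipschitz} to reduce the optimization to a polynomial-size grid search, which is precisely what Algorithm~\ref{alg:any-mean-ptas} carries out when specialized to $\mu_i = 0$. I proceed in three steps.

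\textbf{Step 1 (support reduction).} Let $\boldsigma^\ast = (\sigma_1^\ast,\ldots,\sigma_n^\ast)$ be an optimal allocation and $X_i \sim \N(0,(\sigma_i^\ast)^2)$ the corresponding independent Gaussians. Partition $[n]$ into $L := \{i : \sigma_i^\ast > \eps\}$ and $S := [n]\setminus L$. The budget $\sum_i (\sigma_i^\ast)^2 \le 1$ forces $|L|\le 1/\eps^2$, while every coordinate in $S$ has variance in $[0,\eps^2]$ and total variance at most $1$. Using the pointwise bound $\max_i x_i \le \max(0,\max_L x_i) + \max(0,\max_S x_i)$ and applying Lemma~\ref{lem:eps-contribution} to the $S$-term yields
\[
\opt \;\le\; \E\max\bigl(0,\max_{i\in L} X_i\bigr) \;+\; O\!\left(\eps\sqrt{\ln(1/\eps)}\right).
\]
Defining the truncated allocation $\boldsigma^L$ by $\sigma_i^L := \sigma_i^\ast \mathbf{1}[i\in L]$, zero means imply the associated Gaussians vanish on $S$, so the first term above equals $\obj(\boldsigma^L)$. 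Hence $\boldsigma^L$ is feasible, supported on at most $1/\eps^2$ indices, and has objective at least $\opt - O(\eps\sqrt{\ln(1/\eps)})$.

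\textbf{Step 2 (grid rounding and enumeration).} For each subset $L \subseteq [n]$ of size $k := \lceil 1/\eps^2\rceil$, round each $\sigma_i^L$ with $i\in L$ down to the nearest multiple of $\eps^3$ (and rescale by $1-O(\eps)$ to restore $\sum\sigma_i^2 \le 1$ if needed). Lemma~\ref{lem:lipschitz} bounds the resulting objective loss by $O(1)\cdot k\cdot\eps^3 = O(\eps)$, so at least one enumerated grid vector has objective at least $\opt - O(\eps\sqrt{\ln(1/\eps)})$. The total number of candidate vectors across all supports is at most $\binom{n}{k}\cdot (1/\eps^3)^k = n^{O(1/\eps^2)}$, which is polynomial in $n$ for any fixed $\eps$. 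Evaluating $\obj$ on each candidate --- whether by a closed-form oracle or by $\poly(1/\eps)$ Monte Carlo samples (which suffice because $(Z_i)\mapsto\max_i \sigma_i Z_i$ is $1$-Lipschitz in the standard Gaussians, giving sub-Gaussian concentration) --- and keeping the best yields an allocation within $O(\eps\sqrt{\ln(1/\eps)})$ of $\opt$. Replacing $\eps$ by $\eps' := \Theta(\eps/\sqrt{\log(1/\eps)})$ throughout converts this to the claimed additive $\eps$ gap.

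\textbf{Main obstacle.} The critical ingredient is Lemma~\ref{lem:eps-contribution}: its $O(\eps\sqrt{\ln(1/\eps)})$ bound, rather than the $O(\eps\sqrt{\log n})$ one would obtain from a direct chaining argument, is exactly what holds the near-optimal support size at $O(1/\eps^2)$ --- a quantity independent of $n$ --- and upgrades what would otherwise be a QPTAS into a PTAS. With that lemma in hand, feasibility after rounding and (if Monte Carlo evaluation is needed) a uniform union bound over the $n^{O(1/\eps^2)}$ grid points are technical but entirely routine.
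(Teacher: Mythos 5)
Your proposal is correct and follows essentially the same route as the paper's proof: support reduction via Lemma~\ref{lem:eps-contribution} to a set of at most $O(1/\eps^2)$ coordinates with variance above $\eps^2$, then grid rounding justified by Lemma~\ref{lem:lipschitz}, then an exhaustive search over grid points. The only cosmetic difference is that you enumerate over all $\binom{n}{k}$ candidate supports (as in the paper's Algorithm~\ref{alg:any-mean-ptas} for general means), whereas the paper's zero-mean Algorithm~\ref{alg:zero-mean-ptas} exploits the symmetry under relabeling to fix the support to $\{1,\dots,k\}$ and thereby drop the subset enumeration, shaving an $n^{O(1/\eps^2)}$ factor from the running time without changing the argument.
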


The main intuition for the PTAS is as follows. Firstly, all random variables with variance less than $\eps^2$ can be ignored, as their total contribution to the objective is negligible (i.e. $\tilde{O}(\eps)$). Secondly, the objective is Lipschitz with respect to each $\sigma_i$, thus we may only need to consider solutions with discrete $\sigma_i$ values. The above intuitions are formally described by Lemma~\ref{lem:eps-contribution} (main paper) and Lemma~\ref{lem:lipschitz} (below).

\lemlipschitz*
To prove Lemma~\ref{lem:lipschitz}, we prove the following much stronger lemma, and Lemma~\ref{lem:lipschitz} can be viewed as a corollary by repeatedly applying the Lemma~\ref{lem:lipschitz-strong} to each index $i\in[n]$.

\begin{lemma}\label{lem:lipschitz-strong}
Consider two normal variables $X\sim \N(\mu,\sigma_1)$ and $Y\sim\N(\mu,\sigma_2)$ with the same mean. For any random variable $T$ that is independent with $X$ and $Y$, let $X'=\max(X,T)$, and $Y'=\max(Y,T)$. Then
\begin{equation*}
|\E X'-\E Y'| \leq O(1)\cdot |\sigma_1-\sigma_2|.
\end{equation*}
\end{lemma}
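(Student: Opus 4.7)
The plan is to use a coupling argument. Since $\E X'$ and $\E Y'$ depend only on the marginal distributions $(X,T)$ and $(Y,T)$ respectively, I can freely construct any joint distribution that has the correct marginals and use it to bound $|\E X' - \E Y'|$.

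Concretely, I would take a single standard normal $Z \sim \N(0,1)$ and set $X = \mu + \sigma_1 Z$ and $Y = \mu + \sigma_2 Z$, keeping $T$ independent of $Z$ (and hence independent of both $X$ and $Y$, as required). Under this coupling, $|X - Y| = |\sigma_1 - \sigma_2|\cdot |Z|$ pointwise.

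The next step uses the fact that for any fixed $t$, the map $s \mapsto \max(s, t)$ is $1$-Lipschitz: a case analysis on the relative order of $X$, $Y$, $T$ gives $|\max(X,T) - \max(Y,T)| \le |X - Y|$. Taking expectations,
\begin{equation*}
|\E X' - \E Y'| \le \E|X' - Y'| \le \E|X - Y| = |\sigma_1 - \sigma_2|\cdot \E|Z| = \sqrt{\tfrac{2}{\pi}}\,|\sigma_1 - \sigma_2|,
\end{equation*}
which gives the desired $O(1)$ constant.

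There is no real obstacle here; the only subtle point is the justification that we may couple $X$ and $Y$ even though the lemma statement does not specify their joint distribution. This is valid precisely because the quantities $\E X'$ and $\E Y'$ are marginal functionals, so introducing a joint law with the correct marginals does not change them. Once this coupling is in place, the proof reduces to the elementary $1$-Lipschitzness of $\max$ and the standard computation $\E|Z| = \sqrt{2/\pi}$. Lemma~\ref{lem:lipschitz} then follows by applying Lemma~\ref{lem:lipschitz-strong} inductively, swapping one coordinate at a time: at step $i$, take $T$ to be the max of the already-swapped coordinates $Y_1,\dots,Y_{i-1}$ and the untouched coordinates $X_{i+1},\dots,X_n$, which is independent of both $X_i$ and $Y_i$.
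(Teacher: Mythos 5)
Your proof is correct, and it takes a genuinely cleaner route than the paper's. The paper first reduces to $T = t$ constant (by conditioning), then writes $\E X' - \E Y'$ as an explicit difference of truncated-Gaussian integrals, splits it into two terms $Z_1, Z_2$, and bounds each by hand with several case analyses and derivative estimates; the final constant is the sum of a handful of numerical bounds. You instead observe that $\E X'$ and $\E Y'$ are marginal functionals, so one is free to couple $X$ and $Y$ monotonically via a common $Z\sim\N(0,1)$, then invoke the elementary pointwise inequality $|\max(a,c)-\max(b,c)|\le|a-b|$ and $\E|Z|=\sqrt{2/\pi}$. This removes all the calculus, produces the clean explicit constant $\sqrt{2/\pi}$, and in fact is more general: it shows that $\boldsigma\mapsto\E\max_i X_i$ is $\sqrt{2/\pi}$-Lipschitz in each coordinate for \emph{any} $1$-Lipschitz aggregation, not just $\max$. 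The only point that needs spelling out is exactly the one you flagged — that replacing the unspecified joint law of $(X,Y)$ by the monotone coupling is legitimate because only the marginals $(X,T)$ and $(Y,T)$ enter — and your justification is correct. Your inductive derivation of Lemma~\ref{lem:lipschitz} by swapping one coordinate at a time, taking $T$ to be the max of the remaining coordinates, also matches the paper's intended use of Lemma~\ref{lem:lipschitz-strong}.
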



\begin{proof}[Proof of Lemma~\ref{lem:lipschitz-strong}]

Firstly, we can assume that $X$ and $Y$ have mean 0 by shifting $X,Y,T$ by $\mu$ (thus $X'$ and $Y'$ are also shifted by $\mu$). Secondly, observe that we only need to prove the lemma for $T=t\in\R$ being a constant, and in this case, $X'=\max(X,t)$, and $Y'=\max(Y,t)$. With the lemma proven for a constant $t$, the original lemma can be shown by taking the expectation over all $t\sim T$. 

Without loss of generality, assume that $\sigma_1 > \sigma_2$. Let $\Phi_1$ and $\Phi_2$ be the CDF of $X$ and $Y$, and $\phi_1$ and $\phi_2$ be the PDF $X$ and $Y$ respectively. Suppose we define $\phi(x)=\frac{1}{\sqrt{2\pi}}e^{-x^2/2}$ to be the PDF of a standard normal distribution with mean 0 and variance 1, and $\Phi(x)=\int_{0}^{x}\phi(z)dz$ to be the CDF of a standard normal distribution, then $\phi_i(x)=\frac{1}{\sigma_i}\phi(\frac{x}{\sigma_i})$, $\Phi_i(x)=\phi(\frac{x}{\sigma_i})$ for $i=1,2$. Then
\begin{eqnarray*}
|\E X'-\E Y'| & = &\left|\left(\int_{-\infty}^{t}t\phi_1(x)dx+\int_{t}^{\infty}x\phi_1(x)dx\right)-\left(\int_{-\infty}^{t}t\phi_2(x)dx+\int_{t}^{\infty}x\phi_2(x)dx\right)\right|\\
&=& \left|\left(t\Phi_1(t)+\sigma_1\phi\left(\frac{t}{\sigma_1}\right)\right)-\left(t\Phi_2(t)+\sigma_2\phi\left(\frac{t}{\sigma_2}\right)\right)\right|\\
&=&\left|t\left(\Phi\left(\frac{t}{\sigma_1}\right)-\Phi\left(\frac{t}{\sigma_2}\right)\right)+\left(\sigma_1\phi\left(\frac{t}{\sigma_1}\right)-\sigma_2\phi\left(\frac{t}{\sigma_2}\right)\right)\right|\\
&\leq&\left|t\left(\Phi\left(\frac{t}{\sigma_1}\right)-\Phi\left(\frac{t}{\sigma_2}\right)\right)\right|+\left|\sigma_1\phi\left(\frac{t}{\sigma_1}\right)-\sigma_2\phi\left(\frac{t}{\sigma_2}\right)\right|.
\end{eqnarray*}
Here the first equality is from the definition of $X'$ and $Y'$; the second equality is from the expectation of truncated normal distributions. Let
$$Z_1=t\left(\Phi\left(\frac{t}{\sigma_1}\right)-\Phi\left(\frac{t}{\sigma_2}\right)\right)$$ and
$$Z_2=\sigma_1\phi\left(\frac{t}{\sigma_1}\right)-\sigma_2\phi\left(\frac{t}{\sigma_2}\right).$$ 
To prove the lemma, it suffices to show that both $|Z_1|$ and $|Z_2|$ are bounded by $O(|\sigma_1-\sigma_2|)$. As replacing $t$ with $-t$ does not change the value of both terms, we can assume without loss of generality that $t\geq 0$.
\paragraph{Bounding} $|Z_1|$. We prove that when $\sigma_1>2\sigma_2$, $|Z_1|$ is upper bounded by $O(\sigma_1)$; when $\sigma_2\leq \sigma_1\leq 2\sigma_2$, $|Z_1|$ is upper bounded by $O(|\sigma_1-\sigma_2|)$.

If $\sigma_1>2\sigma_2$, 
\begin{eqnarray*}
|Z_1|&=&t\cdot\frac{1}{\sqrt{2\pi}}\int_{t/\sigma_1}^{t/\sigma_2}e^{-x^2/2}dx\\
&\leq&\frac{t}{\sqrt{2\pi}}\int_{t/\sigma_1}^{\infty}e^{-x^2/2}dx\\
&=&\frac{t}{\sqrt{2\pi}}\int_{0}^{\infty}e^{-(x+\frac{t}{\sigma_1})^2/2}dx\\
&\leq&\frac{t}{\sqrt{2\pi}}\int_{0}^{\infty}e^{-(x^2+(\frac{t}{\sigma_1})^2)/2} dx\\
&=&\frac{1}{2}te^{-\frac{t^2}{2\sigma_1^2}}\leq \frac{1}{2\sqrt{e}}\sigma_1\leq \frac{1}{\sqrt{e}}(\sigma_1-\sigma_2).
\end{eqnarray*}
Here the first line is from the definition of $\Phi$; the last line is by $\frac{1}{\sqrt{2\pi}}\int_{0}^{\infty}e^{-x^2/2}dx=\frac{1}{2}$, and function $\frac{1}{2}xe^{-x^2/2}$ is maximized at $x=1$ with maximum value $\frac{1}{2\sqrt{e}}$.

If $\sigma_1\leq 2\sigma_2$, 
\begin{eqnarray*}
|Z_1|&=&t\cdot\frac{1}{\sqrt{2\pi}}\int_{t/\sigma_1}^{t/\sigma_2}e^{-x^2/2}dx\\
&\leq&\frac{t}{\sqrt{2\pi}}\left(\frac{t}{\sigma_2}-\frac{t}{\sigma_1}\right)e^{-\frac{t^2}{2\sigma_1^2}}\\
&=&\frac{1}{\sqrt{2\pi}}\cdot\frac{t^2}{\sigma_1\sigma_2}e^{-\frac{t^2}{2\sigma_1^2}}(\sigma_1-\sigma_2)\\
&\leq&\sqrt{\frac{2}{\pi}}\cdot \frac{t^2}{\sigma_1^2}e^{-\frac{t^2}{2\sigma_1^2}}(\sigma_1-\sigma_2)\\
&\leq&\sqrt{\frac{2}{\pi}}\cdot\frac{2}{e}(\sigma_1-\sigma_2).
\end{eqnarray*}
Here the second line is by $e^{-x^2/2}$ is decreasing on $[\frac{t}{\sigma_1},\frac{t}{\sigma_2}]$; the fourth line is by $\sigma_1\leq 2\sigma_2$; the last line is by function $xe^{-x/2}$ is maximized at $x=2$ with maximum value $\frac{2}{e}$. Thus in both cases, we have shown that $|Z_1|$ is upper bounded by $O(|\sigma_1-\sigma_2|)$ (with small constant $<1$).

\paragraph{Bounding} $|Z_2|$. To show that $|Z_2|=\left|\sigma_1\phi\left(\frac{t}{\sigma_1}\right)-\sigma_2\phi\left(\frac{t}{\sigma_2}\right)\right|$ is bounded by $O(|\sigma_1-\sigma_2|)$, it suffices to show that function $g(x)=x\phi\left(\frac{t}{x}\right)$ has bounded derivative. In fact, we can rewrite $g'(x)$ as
\begin{equation*}
g'(x)=\phi\left(\frac{t}{x}\right)+x\phi'\left(\frac{t}{x}\right)= \phi\left(\frac{t}{x}\right)+\frac{1}{\sqrt{2\pi}}\cdot\frac{t^2}{x^2}e^{-\frac{t^2}{2x^2}}
\end{equation*}
being the sum of two positive terms with each term being bounded by some absolute constant. Actually when $x=t$, $|g'(x)|$ is maximized with value $\sqrt{\frac{2}{e\pi}}$. Thus by the mean value theorem, $|Z_2|\leq \sqrt{\frac{2}{e\pi}}(\sigma_1-\sigma_2)$.

Combining the bounds for $|Z_1|$ and $|Z_2|$, we get $|\E X'-\E Y'|\leq O(|\sigma_1-\sigma_2|)$, finishing the proof of the theorem.

\end{proof}

\noindent Now with the help of Lemma~\ref{lem:eps-contribution} and Lemma~\ref{lem:lipschitz}, we are ready to prove Theorem~\ref{thm:PTAS-zeromean}.

\begin{proof}[Proof of Theorem~\ref{thm:PTAS-zeromean}]
Let k = $\frac{1}{\eps^2}$. We first show that we only need to distribute variance to at most $k$ variables.

When $n\geq k$, first observe that $\E\max_{i\in [n]} X_i$ and $\E\max(0,\max_{i\in [n]} X_i)$ are close (up to $O(2^{-1/\eps^2})$). This is true since $\Pr[\max_{i\in [n]} X_i < 0] = 2^{-k}$, and $\E[\max_{i\in [n]} X_i|\max_{i\in [n]} X_i < 0] \geq \E[X_1|X_1<0]=\sqrt{\frac{2}{\pi}}\sigma_1=O(1)$ for $\sigma_1\leq 1$. 

Then for any variance allocation $\sigma_1\geq\sigma_2 \geq\cdots\geq\sigma_n$, we know that the objective value 
\[\E\max(0,\max_{i\in [n]} X_i)\leq \E\max(0,\max_{i\in [k]} X_i)+\E\max(0,\max_{k+1\leq i\leq n} X_i)=\E\max(0,\max_{i\in [k]} X_i)+O(\eps)\]
by Lemma~\ref{lem:eps-contribution}\footnote{For clearance of presentation we do not optimize the small terms.}. Therefore, an algorithm that wants to obtain $O(\eps)$ loss does not need to distribute variance $<\eps^2$ to any variable. Also, as the total variance budget is 1, there can be at most $\frac{1}{\eps^2}$ variables with variance $>\eps^2$, thus $k\leq \frac{1}{\eps^2}$. This means that an algorithm that wants to obtain $O(\eps)$ loss only needs to distribute at least $\eps^2$ variance to at most $\frac{1}{\eps^2}$ variables.

By Lemma~\ref{lem:lipschitz}, if an algorithm can obtain the optimal objective conditioned on each $\eps_i$ being an integer times $\eps^3$, the algorithm will have a loss at most $k\eps^3$, if at most $k$ variables have positive variance. Thus the following algorithm has $O(\eps)$ loss compared to the optimal objective:

\begin{algorithm}
\caption{A PTAS algorithm distributing variances to zero-mean variables}
\label{alg:zero-mean-ptas}
\begin{algorithmic}
\STATE {\bfseries Input:} $n \geq 0$
\STATE $k \gets \frac{1}{\eps^2}$;
\STATE Set current maximum $M=0$;
\FOR{any $i>k$}
\STATE $\hatsigma_i\gets0$
\ENDFOR
\FOR{all possible $(\sigma_1,\cdots,\sigma_k)$, where $\hatsigma_i$ is an integral multiple of $\eps^3$ for every $i\in[k]$}
  \IF{$\E\max_{i\in [k]}X_i > M$}
  \STATE Set $(\hatsigma_1,\cdots,\hatsigma_k)\gets(\sigma_1,\cdots,\sigma_k)$;
  \STATE $M\gets \obj(\boldsigma)$;
  \ENDIF
\ENDFOR
\STATE {\bfseries Output:} $(\hatsigma_1,\cdots,\hatsigma_n)$
\end{algorithmic}
\end{algorithm}

Now we analyze the running time of the algorithm. There are at most $(\eps^{-3})^{1/\eps^2}$ different vectors of $\hatboldsigma$, and for each vector, we can efficiently compute the max of $k$ normal variables with given variance. Thus the algorithm is a polynomial time approximation scheme of the original objective. 

\end{proof}

\noindent The algorithm can be easily extended to the case where the means of the variables are no longer 0.

\thmptasindependent*

\begin{proof}[Proof of Theorem~\ref{thm:PTAS-anymean}]
For any variance allocation $\sigma_1,\cdots,\sigma_n$, let $G_1\cup G_2$ be the partition of all variables such that $G_1$ contains all variables $X_i$ with $\sigma_i>\eps^2$; $G_2$ contains all variables $X_i$ with $\sigma_i\leq\eps^2$. For each variable $X_i$, define $X'_i=X_i-\mu_i$. Then $X'_i\sim\N(0,\sigma_i^2)$ is a zero-mean variable. We can bound the original objective as follows:
\begin{eqnarray*}
\E\max_{i\in[n]} X_i&=&\E\max\left(\max_{X_i\in G_1}X_i, \max_{X_i\in G_2}X_i\right)\\
&\leq&\E\max\left(\max_{X_i\in G_1}X_i, \max_{X_i\in G_2}\mu_i+\max\left(0,\max_{X_i\in G_2}X'_i\right)\right)\\
&\leq&\E\max\left(\max_{X_i\in G_1}X_i, \max_{X_i\in G_2}\mu_i\right)+\E\max\left(0,\max_{X_i\in G_2}X'_i\right)\\
&=&\E\max\left(\max_{X_i\in G_1}X_i, \max_{X_i\in G_2}\mu_i\right)+O(\eps).
\end{eqnarray*}
Here the first line is by splitting the objective into two groups of variables; the second line is by decomposing $X_i$ to $X_i'+\mu_i$, and upper-bounding $X_i'$ bt $\max(0,X_i')$; the third line is by getting the positive term outside of the outer maximization; the last line is by using Lemma~\ref{lem:eps-contribution} to bound the maximum of multiple zero-mean variables. This means that to get $O(\eps)$ loss in the objective, it suffices to only distribute variance to $O(\frac{1}{\eps^2})$ variables, with each having variance at least $\eps^2$. 

\setcounter{algorithm}{0}
\begin{algorithm}[htb]
\caption{A PTAS algorithm distributing variances to independent arbitrary-mean variables}
\begin{algorithmic}[1]
\STATE {\bfseries Input:} $\mu_1,\mu_2,\cdots,\mu_n \geq 0$
\STATE $k \gets \frac{1}{\eps^2}$;
\STATE Set current maximum $M=0$;
\FOR{all possible $k$ indices $1\leq i_1\leq i_2\leq\cdots\leq i_k\leq n$}
\FOR{any $i\not\in \{i_1,\cdots,i_k\}$}
\STATE Set $\hatsigma_i=0$;
\ENDFOR
\FOR{all possible $(\sigma_{i_1},\cdots,\sigma_{i_k})$, where $\sigma_{i_j}$ is an integral multiple of $\eps^3$ for every $i_j\in[n]$}
  \IF{$\E\max_{i\in[n]}X_{i} > M$}
  \STATE $(\hatsigma_{i_1},\cdots,\hatsigma_{i_k})\gets(\sigma_{i_1},\cdots,\sigma_{i_k})$;
  \STATE $M\gets \obj(\boldsigma);$
  \ENDIF
\ENDFOR
\ENDFOR
\STATE {\bfseries Output:} $\hatboldsigma=(\hatsigma_1,\cdots,\hatsigma_n)$
    
\end{algorithmic}
\end{algorithm}

Similar to Theorem~\ref{thm:PTAS-zeromean}, we can use a similar algorithm to enumerate all possible ways to distribute variances being multiple of $\eps^3$, see Algorithm~\ref{alg:any-mean-ptas} for details. The only difference from the zero-mean Algorithm~\ref{alg:zero-mean-ptas} is that we now need to guess which $\frac{1}{\eps^2}$ variables should be distributed variances. The algorithm runs in polynomial time for any fixed $\eps>0$, and obtains a solution with $O(\eps)$ loss compared to the optimal objective.

\end{proof}

\subsection{Variance distribution for correlated variables}
\label{subsec:correlated}
We now proceed to the more general case, where there are $n$ jointly normally distributed random variables $(X_1,X_2,\cdots,X_n)\sim\N(\mu,\Sigma)$. We still assume that the total variance of the variables is bounded: $\sum_{i=1}^{n}\Sigma_{ii}\leq 1$, and we are also free to determine the positive semi-definite covariance matrix $\Sigma$ satisfying the constraint.  
Let $\opt$ denote the optimal objective in this setting:
\begin{eqnarray*}
\opt&=&\max_{\Sigma}\ \E\max_{i\in [n]} X_i\\
\textrm{subject to}& &\sum_{i=1}^{n}\Sigma_{ii}= 1.
\end{eqnarray*}

We have the following theorem.

\thmptascorrelated*

The algorithm would be almost the same as Algorithm~\ref{alg:any-mean-ptas}. Although we proved that all random variables with small variance $<\eps^2$ can be ignored in Lemma~\ref{lem:eps-contribution} for the correlated case, the Lemma that proves the smoothness of the objective with respect to the covariance is only shown for the independent case (Lemma~\ref{lem:lipschitz}). Therefore, we need to get a stronger smoothness lemma. We prove the following two lemmas: one shows that for any two joint distributions over $n$ variables, if they have small Earth Mover's Distance, then their objective $\obj$ would be close; the other shows that for two Gaussian distributions with covariance matrices close element-wise, their Earth Mover's distance is small.

\lememdmax*

\begin{proof}[Proof of Lemma~\ref{lem:emd-max}]
Consider the coupling $\gamma$ between $D_X$ and $D_Y$ that defines the Earth Mover's Distance 
$$d_{EMD}(D_X,D_Y)=\E_{(X,Y)\sim \gamma}\|X-Y\|_1.$$
Notice that for any two vectors $X$ and $Y$, $|\max_i X_i-\max_i Y_i|\leq \|X-Y\|_1$. Thus
$$|\E\max_i X_i-\E\max_i Y_i|=\E_{(X,Y)\sim \gamma}|\E\max_i X_i-\E\max_i Y_i|\leq \E_{(X,Y)\sim \gamma}\|X-Y\|_1=d_{EMD}(D_X,D_Y)\leq\eps.$$
\end{proof}

\lememd*

\begin{proof}[Proof of Lemma~\ref{lem:emd-w2}]
For any zero-mean multi-dimensional Gaussian distributions with Positive Semi-Definite (PSD) matrices $A$ and $B$, we upper bound the Earth Mover's Distance with their Wasserstein-2 ($W_2$) distance. The $W_2$ distance is defined similarly to the EMD distance, where EMD distance describes the smallest effort to transport between the two distributions using $\ell_1$ distance, while $W_2$ distance uses $\ell_2$ distance in the coupling. It is known that 
$$d_{EMD}(D_1,D_2)\leq\sqrt{k}\cdot d_{W_2}(D_1,D_2),$$
and
$$d_{W_2}(D_1,D_2)=\tr\left(A+B-2(A^{1/2}BA^{1/2})^{1/2}\right).$$

Now we look at the entry-wise $\ell_1$ distance between $A^2$ and $A^{1/2}BA^{1/2}$. As $\tr(A),\tr(B)\leq1$, we have every element in $A,B,A^{1/2},B^{1/2}$ have absolute value bounded below 1.
\begin{eqnarray*}
\|A^{1/2}A-A^{1/2}B\|_{1,1}&=&\sum_{i,j}\left|\sum_{r}A^{1/2}_{ir}(A-B)_{rj}\right|\\
&\leq&\sum_{r}\left(\sum_{i}|A^{1/2}_{ir}|\right)\left(\sum_{j}(A-B)_{rj}\right)\\
&\leq&\sum_{r}\left(\sum_{i}1\right)\left(\sum_{j}(A-B)_{rj}\right)\\
&\leq&k\|A-B\|_{1,1}\leq k\eps.
\end{eqnarray*}
Similarly 
$$\|A^{1/2}AA^{1/2}-A^{1/2}BA^{1/2}\|_{1,1}\leq k\|A^{1/2}A-A^{1/2}B\|_{1,1}\leq k^2\eps.$$

Notice that for two matrices with small $\ell_1$ distance, their spectrum is similar. Let $P=A^2$, $Q=A^{1/2}BA^{1/2}$. Let $\lambda_i(P)$ denote the $i$th largest eigenvalue of $P$. By Weyl's inequality,
\begin{eqnarray*}
|\lambda_i(P)-\lambda_i(Q)|\leq \|P-Q\|_{op}\leq \sqrt{k}\|P-Q\|_{1,1}\leq k^{2.5}\eps.
\end{eqnarray*}
Here $\|\cdot\|_{op}$ is the operator norm. Then 
\begin{eqnarray*}
|\tr(P^{1/2})-\tr(Q^{1/2})|&=&\left|\sum_{i=1}^{k}\sqrt{\lambda_i(P)}-\sum_{i=1}^{k}\sqrt{\lambda_i(Q)}\right|\\
&\leq&\sum_{i=1}^{k}\left|\sqrt{\lambda_i(P)}-\sqrt{\lambda_i(Q)}\right|\\
&\leq&\sum_{i=1}^{k}\left|\sqrt{|\lambda_i(P)-\lambda_i(Q)|}\right|\\
&\leq&k\cdot k^{1.25}\eps^{0.5}=k^{2.25}\eps^{0.5}.
\end{eqnarray*}

Thus
\begin{eqnarray*}
d_{W_2}(D_1,D_2)&=&\tr(A)+\tr(B)-2\tr(Q^{1/2})\\
&=&\tr(A)+\tr(B)-2\tr(Q^{1/2})+2\tr(P^{1/2})-2\tr(A)\\
&\leq&|\tr(A)-\tr(B)|+2|\tr(P^{1/2})-\tr(Q^{1/2})|\\
,
\end{eqnarray*}
and $d_{EMD}(D_1,D_2)\leq \sqrt{k}d_{W_2}(D_1,D_2)=O(k^{2.75}\eps^{0.5})$
\end{proof}

\noindent Now we are ready to propose and analyze the algorithm for Theorem~\ref{thm:PTAS-correlated}.

\begin{proof}[Proof of Theorem~\ref{thm:PTAS-correlated}]

The algorithm and its analysis are almost identical to Theorem~\ref{thm:PTAS-anymean}, except we need a more fine-grained search due to the loss in smoothness lemma (Lemma~\ref{lem:emd-w2} vs. Lemma~\ref{lem:lipschitz}). Let $k=\frac{1}{\eps^2}$ and $\delta$ be to-be-determined parameters. When we search for the variance allocation, instead of searching for the variance vector with at most $k$ positive terms, we search for the covariance matrix of size at most $k^2$ for a subset of variables. As $|\Sigma_{ij}|\leq\sqrt{\Sigma_{ii}\Sigma_{jj}}\leq 1$, we only need to search for a covariance matrix with each element being integral multiples of $\delta$ bounded between $-1$ and $1$.

The $\ell_1$ accuracy of the covariance matrix we get is $k^2\delta$. By Lemma~\ref{lem:emd-w2}, the Earth Mover's distance of the optimal solution and the matrix we get from the grid search algorithm is $O(k^{2.75}\cdot\sqrt{k^2\delta})=O(k^{3.75}\delta)$. Thus a grid size $\delta=\eps^{8.5}$ is enough. See Algorithm~\ref{alg:any-mean-correlated-ptas} for the full algorithm.

\begin{algorithm}[htb]
\caption{A PTAS algorithm distributing variances to arbitrary-mean correlated variables}
\begin{algorithmic}
\STATE {\bfseries Input:} $\mu_1,\mu_2,\cdots,\mu_n \geq 0$
\STATE $k \gets \frac{1}{\eps^2}$;
\STATE Set current maximum $M=0$;
\FOR{all possible $k$ indices $1\leq i_1\leq i_2\leq\cdots\leq i_k\leq n$}
\FOR{any $i\not\in \{i_1,\cdots,i_k\}$}
\FOR{any $\ell\in[n]$}
\STATE set $\widehat{\Sigma}_{i\ell}=\widehat{\Sigma}_{\ell i}=0$;
\ENDFOR
\ENDFOR
\FOR{any $i,j\in \{i_1,\cdots,i_k\}$}
\FOR{all possible values of $(\Sigma_{ij})$ being an integral multiple of $\eps^3$ with value in [-1,1]}
  \IF{ $\Sigma$ is positive semi-definite and $\obj(\Sigma) > M$}
  \STATE $\widehat{\Sigma}\gets\Sigma$;
  \STATE $M\gets \obj(\Sigma)$;
  \ENDIF
\ENDFOR
\ENDFOR
\ENDFOR
\STATE {\bfseries Output:} $\widehat{\Sigma}$
\end{algorithmic}
\end{algorithm}

\end{proof}

\section{More Details on Variance Allocation in Multiple Subsets}
\label{sec:multipleset}

In this section, we study the multi-subset setting where there are $m>1$ sets. Recall that each set's contribution is again defined by $\max_{i\in S_j}X_i$, and our objective is to maximize the total expected contribution of the $m$ sets. We assume that the $n$ variables $X_i\sim \N(\mu_i,\sigma_i^2)$ are independent and normally distributed, and we need to distribute a total variance budget $1$. To be more precise, the optimal objective $\opt$ is defined as

\begin{eqnarray*}
\opt&=&\max_{\sigma_1,\cdots,\sigma_n}\E\sum_{j=1}^{m}\max_{i\in S_j} X_i\\
\textrm{subject to}& &\sum_{i=1}^{n}\sigma_i^2= 1.
\end{eqnarray*}

An equivalent way to define the instance is to use a bipartite graph. The characterization graph of the instance is a bipartite graph with $n$ vertices on one side representing variables $X_i$ for $i\in[n]$, and $m$ vertices on the other side representing sets $S_j$ for $j\in[n]$, with edge $(i,j)$ exists if and only if $i\in S_j$.

When the sets are large in size, the PTAS algorithms in the previous sections are still applicable. Intuitively, this is true since $\opt$ is of order $\Theta(m)$, and an additive loss $O(\eps)$ for each set leads to a total loss $O(m\eps)$. However, when the sets are relatively small, it is possible that the optimal variance allocation algorithm splits the total budget evenly to all variables, resulting in each set having $o(\eps)$ contribution to the total objective. In this case, a grid search algorithm with grid points being small constants is even not sufficient to get a subpolynomial competitive ratio.

\thmcycle*

\begin{proof}[Proof of Theorem~\ref{thm:cycle}]
For the special case where there are only two variables in each set, the optimization objective can be expressed by $\sigma$ in a closed form. Notice that the contribution of each set $S_j$ is
$$\E\max(X_{j},X_{j+1})=\E\frac{X_{j}+X_{j+1}+|X_{j}-X_{j+1}|}{2}=\mu+\frac{1}{2}\E|X_{j}-X_{j+1}|.$$
As $X_{j}\sim \N(\mu,\sigma_j^2)$ and $X_{j+1}\sim \N(\mu,\sigma_{j+1}^2)$, $|X_{j}-X_{j+1}|$ is the absolute value of a normally distributed variable with variance $\sigma_{j}^2+\sigma_{j+1}^2$, and has mean $\sqrt{\frac{2}{\pi}}\cdot\sqrt{\sigma_{j}^2+\sigma_{j+1}^2}$. Thus, the total objective can be written as
\begin{eqnarray*}
\sum_{j=1}^{n}\E\max(X_{j},X_{j+1})&=&\sum_{j=1}^{n}\left(\mu+\sqrt{\frac{1}{2\pi}}\cdot\sqrt{\sigma_{j}^2+\sigma_{j+1}^2}\right)\\
&=&n\mu+\sqrt{\frac{1}{2\pi}}\sum_{j=1}^{n}\sqrt{\sigma_{j}^2+\sigma_{j+1}^2}.
\end{eqnarray*}
Thus it suffices to optimize the second term. By Cauchy-Schwartz inequality, 
\begin{eqnarray*}
\left(\sum_{j=1}^{n}\sqrt{\sigma_{j}^2+\sigma_{j+1}^2}\right)^2&\leq&\left(\sum_{j=1}^{n}\left(\sigma_{j}^2+\sigma_{j+1}^2\right)\right)\left(\sum_{j=1}^{n}1^2\right)\\
&=& 2n.
\end{eqnarray*}
The equality holds if and only if $\sigma_j+\sigma_{j+1}=\sigma_{j+1}+\sigma_{j+2}=\frac{2}{n}$ for every $j\in[n]$. Thus, setting all variables to have variance $\frac{1}{n}$ is an optimal solution to the original optimization problem.

\end{proof}

\subsection{Instance generated by a random graph}
In this section, we study the setting where all variables are independent with mean $\mu_i=0$ and total variance budget 1.
The bipartite characterization graph generation follows Erd\"{o}s–Rényi model with each edge appearing in the graph with probability $p$. In other words, for any variable $X_i$, $i\in S_j$ with probability $p$.

We observe that for a denser random graph (with larger $p$), fewer variables will have non-negligible variance in the optimal allocation. This is formally characterized in the following theorem.

\thmrandomgraphordering*

\begin{proof}[Proof of Theorem~\ref{thm:random-graph-ordering}]
We first give an upper bound on the objective we can obtain in the random graph. Consider the following variance allocation: $\sigma^2_1=\sigma^2_2=\cdots=\sigma^2_{\frac{1}{p}}=p$, $\sigma_{\frac{1}{p}+1}=\cdots=\sigma_n=0$. Then for every set $S_j$, as any $i\in[n]$ appears in $S_j$ with probability $p$, the probability that $S_j$ contains at least one element in range $[1,\frac{1}{p}]$ is at least $1-(1-\frac{1}{p})^{p}>1-\frac{1}{e}$. Notice that when at least one $X_i$ in $S_j$ has variance $p$, the contribution of $S_j$ to the objective is at least $\E\max(X_1,0)=\sqrt{\frac{2p}{\pi}}=\Omega(\sqrt{p})$. Thus, the optimal objective of the random graph is at least $\Omega(m\sqrt{p})$.

As the total variance budget of all variables is $1$, there can be at most $O(\frac{1}{p})$ variables with large variance $\Omega(p)$. Now we show that if there are $o(\frac{1}{p})$ variables with variance $\Omega(p)$, then the total objective value is much smaller than the objective obtained by the allocation in the previous paragraph. To be more precise, we prove that for small constant $\eps>0$, if there are less than $\frac{\eps}{p}$ variables with variance at least $\eps p$, the total objective is only $o(m\sqrt{p})$.

For any variance vector $\sigma$, partition the variables into two groups $G_1$ and $G_2$. $G_1$ contains variables with variance at least $\frac{1}{\eps} p$, while $G_2$ contains all other variables with small variance. For any set $S_j$, let $s_j=\sum_{i\in S_j}\sigma_i^2$. The set's contribution to the total objective can be bounded by 

\begin{eqnarray*}
\E\max_{i\in S_j} X_i&\leq&\E\max\left(0,\max_{i\in S_j\cap G_1} X_i,\max_{i\in S_j\cap G_2} X_i\right)\\
&\leq&\sum_{i\in S_j\cap G_1}\max(0,X_i)+\E\max\left(0,\max_{i\in S_j\cap G_2}X_i\right)\\
&=&\sum_{i\in S_j\cap G_1}\max(0,X_i)+\sqrt{s_j}\E\max\left(0,\max_{i\in S_j\cap G_2}\frac{X_i}{\sqrt{s_j}}\right)\\
&\leq&\sum_{i\in S_j\cap G_1}\max(0,X_i)+\sqrt{s_j}\cdot O\left(\sqrt{\frac{\eps p}{s_j}\ln\frac{s_j}{\eps p}}\right)\\
&=&\sqrt{\frac{2}{\pi}}\sum_{i\in S_j\cap G_1}\sigma_i + \sqrt{\eps p}\cdot O\left(\sqrt{\ln\frac{s_j}{\eps p}}\right),
\end{eqnarray*}
here the last inequality is by Lemma~\ref{lem:eps-contribution}. If we sum up the total contribution for all sets $S_j$, notice that with high probability every variable $X_i$ is only in $O(pm)$ sets, and $s_j=O(p)$. Thus with high probability
\begin{eqnarray*}
& &\E\sum_{j\in[m]}\max_{i\in S_j} X_i\\
&\leq&\sqrt{\frac{2}{\pi}}\sum_{i\in G_1}\sigma_i\sum_{j:S_j\ni i}1+m\cdot O(\sqrt{\eps p\ln\frac{1}{\eps}})\\
&=&\sum_{i\in G_1}\sigma_i\cdot O(pm)+O(m\sqrt{\eps p\ln\frac{1}{\eps}})\\
&\leq&O(pm)\sqrt{\left(\sum_{i\in G_1}\sigma_i^2\right)}\cdot\sqrt{\left(\sum_{i\in G_1}1^2\right)}+O(m\sqrt{\eps p\ln\frac{1}{\eps}})\\
&\leq&O(pm)\cdot 1 \cdot \sqrt{\frac{\eps}{p}}+O(m\sqrt{\eps p\ln\frac{1}{\eps}})\\
&=&O(m\sqrt{p\eps\ln\frac{1}{\eps}}).
\end{eqnarray*}
This means that such an allocation only leads to $O(\sqrt{\eps\ln\frac{1}{\eps}})$ fraction of the optimal objective (which is $O(m\sqrt{p})$), thus is negligible when $\eps$ is small. 

\end{proof}

\subsection{Instance generated by arbitrary graph}
\label{subsec:multiple}
In this section, we provide an efficient approximation algorithm for \genb with competitive ratio $O(\log n)$. 
\thmmultiplesets*

\begin{proof}[Proof of Theorem~\ref{thm:polylog-approx}]

We will iteratively simplify the problem and reduce the search space of the optimal variance vector, each time only losing a small approximation factor.

Firstly, we can remove all sets $S_j$ with size 1. This is because no matter what variance we allocate to the variable $X_i$ in the set, its contribution to the total objective $\E\max_{i\in S_j}X_i=\mu_i$ does not change. Therefore, if after removing all singleton sets we get a competitive ratio $\alpha$ to the new instance, the algorithm gets $\alpha$ competitive ratio to the original instance.

Next, we will transform our problem to remove the dependency on $\mu_i$ and only lose a constant factor in the competitive ratio. For each 
$i\in [n]$, let $X'_i\sim\N(0,\sigma_i^2)$ be a zero-mean normal variable with the same variance as $X_i$. For any set $S_j$, by Lemma~\ref{lem:max0} we have
$$\E\max_{i\in S_j}X_i=\E\max_{i\in S_j}(\mu_i+X'_i)\leq \E\max_{i\in S_j}\mu_i+\E\max_{i\in S_j}\max(0,X'_i)\leq \E\max_{i\in S_j}\mu_i+2\E\max_{i\in S_j}\max(0,X_i).$$

Notice that $\E\max_{i\in S_j}\mu_i$ is an obtainable objective via setting $\sigma_i=0$ for every variable $X_i$; for any $\sigma$ vector that maximizes $\E\max_{i\in S_j}\max(0,X_i)$, using the same variance allocation only leads to a larger objective in the original instance with $\mu_i\geq 0$. Thus solving the problem with each variable having zero mean leads to a 3-approximation of the original problem. In the following proof, we just assume $\mu_i=0$ for every $i$.

Next, we will bound the search range of $\sigma$. By Lemma~\ref{lem:var2approx}, considering $\sigma_i^2$ being powers of $4$ only loses a factor of $2$ compared to the optimal solution. Thus we will only search for $\sigma$ with each $\sigma_i=4^{-k}$ or 0 for some integer $k$.

For any such $\sigma$, partition the variables $X_1,\cdots,X_n$ to the following $O(\log n)$ groups. Let $K=\log_2 n$. For each $k\leq K$, let $G_k$ contains all variables with variance $4^{-k}$. Let $G_{K+1}$ contains all other variables with variance smaller than $\frac{1}{n^2}$. Then for any set $S_j$,
\begin{eqnarray}
\E\sum_{j}\max_{i\in S_j}X_i&=&\E\sum_{j}\max\left(\max_{X_i\in S_j\cap G_0}X_i,\max_{X_i\in S_j\cap G_1}X_i,\cdots,\max_{X_i\in S_j\cap G_{K+1}}X_i\right) \nonumber\\
&\leq&\sum_{k=0}^{K}\sum_{j}\E\max(0,\max_{X_i\in S_j\cap G_k}X_i)+\sum_{j}\E\max(0,\max_{X_i\in S_j\cap G_{K+1}}X_i)\nonumber\\
&\leq&\sum_{k=0}^{K}\sum_{j}\E\max(0,\max_{X_i\in S_j\cap G_k}X_i)+\sum_{j}O\left(\frac{1}{n}\sqrt{\ln n}\right), \label{eqn:kcontribution}
\end{eqnarray}
here the last inequality is by Lemma~\ref{lem:eps-contribution}. When setting all variables to have variance $\frac{1}{n}$, for each set $S_j$, $\E\max_{i\in S_j}X_i=\Omega(\frac{1}{\sqrt{n}})=\omega\left(\frac{1}{n}\sqrt{\ln n}\right)$, thus the contribution from variables in group $G_{K+1}$ in \eqref{eqn:kcontribution} is negligible and we can ignore them.
Notice that when setting all variables to have variance either $4^{-k}$ or 0, the optimal objective is at least $\sum_{j}\E\max(0,\max_{X_i\in S_j\cap G_k}X_i)$ from this specific $\sigma$.\footnote{For every set $S_j$, if $S_j$ contains variables from only 1 group, the objective we get is actually $\E\max_{X_i\in S_j\cap G_k}X_i$ instead of $\E\max(0,\max_{X_i\in S_j\cap G_k}X_i)$.} Thus by solving the problem where all variables have the same variance $2^{-k}$, the obtained objective is at least $O(\frac{1}{\log n})$ fraction of the optimal objective for one of $k\leq \log_2 n$.

\begin{algorithm}[htb]
\caption{An $O(\log n)$-approx algorithm distributing variances to variables in multiple sets}
\begin{algorithmic}
    \STATE {\bfseries Input:} Means $(\mu_1,\cdots,\mu_n)$, Sets $S_1,\cdots,S_m$
    \STATE Initialize current best objective $M \gets 0$;
    \STATE Initialize current best $(\hatsigma_1,\cdots,\hatsigma_n)\gets(0,\cdots,0)$;
    \STATE Remove all sets with $|S_j|=1$;
    \FOR{$k\gets 0\ \textrm{ to }\log_2 n$}
        \STATE $(\sigma_1,\cdots,\sigma_n)\gets (0,\cdots,0)$;
    \WHILE{$\#$ variables with variance $4^{-k}$ $<\min(4^{k},n)$}
    \STATE Find the variable $i$ with $\sigma_i=0$ such that after setting $\sigma_i=4^{-k}$, $\obj(\boldsigma)$ is maximized;
    \STATE $\sigma_i\gets4^{-k}$;
    \ENDWHILE
  \IF{$\obj(\boldsigma)>M$}
    \STATE  $M\gets \obj(\boldsigma)$;
    \STATE  $\hatboldsigma \gets \boldsigma$;
    \ENDIF
    \ENDFOR
   \STATE {\bfseries Output:} $(\hatsigma_1,\cdots,\hatsigma_n)$
\end{algorithmic}

\end{algorithm}

Now we want to solve the following problem: find the set $S\subseteq [n]$ of variables with $|S|\leq 4^k$, such that when all variables $X_i\in S$ have variance $\sigma_i^2=4^{-k}$ and all variables not in $S$ have variance 0, the total objective is maximized. By Lemma~\ref{lem:submodularset} this is a submodular maximization problem with a cardinality constraint, and a greedy solution obtains $1-\frac{1}{e}$ fraction of the optimal solution \cite{nemhauser1978analysis}. The complete $O(\log n)$-approx algorithm is specified in Algorithm~\ref{alg:general-log-approx}.

\end{proof}

\begin{restatable}{lemma}{lemmax}\label{lem:max0}
For $n\geq 2$ independent normally distributed variables $X_1,X_2,\cdots,X_n$ with non-negative means $\mu_1,\cdots,\mu_n\geq0$,
$$\E\max(X_1,\cdots,X_n)\geq(1-2^{1-n})\E\max(0,X_1,\cdots,X_n).$$
\end{restatable}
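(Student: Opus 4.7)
The plan is to reduce the claim to a bound on the ``negative part'' of $M := \max_i X_i$ and then exploit independence. Write $M^+ := \max(0, X_1, \ldots, X_n)$ and $M^- := \max(-M, 0)$, so that $M = M^+ - M^-$ and hence $\E M = \E M^+ - \E M^-$. The desired inequality $\E M \geq (1 - 2^{1-n})\E M^+$ is therefore equivalent to the single-sided bound $\E M^- \leq 2^{1-n}\E M^+$, and this is what I would try to establish.

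The key observation is that $M^-$ is nonzero only on the event $A := \{X_1 < 0, \ldots, X_n < 0\}$, and on $A$ we have $M^- = -\max_i X_i = \min_i |X_i| \leq \tfrac{1}{n}\sum_i |X_i|$. Therefore
\begin{equation*}
\E M^- \;\leq\; \frac{1}{n}\sum_{i=1}^n \E\bigl[|X_i|\,\mathbf{1}_A\bigr].
\end{equation*}
Because the $X_i$ are independent and $\mu_j \geq 0$ forces $\Pr[X_j < 0] \leq \tfrac{1}{2}$ for every $j$, I would factor
\begin{equation*}
\E\bigl[|X_i|\,\mathbf{1}_A\bigr] \;=\; \E\bigl[|X_i|\,\mathbf{1}\{X_i<0\}\bigr]\,\prod_{j \neq i}\Pr[X_j < 0] \;\leq\; 2^{-(n-1)}\,\E X_i^-,
\end{equation*}
where $X_i^- := \max(-X_i, 0)$.

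The last two ingredients are elementary. First, $\mu_i \geq 0$ gives $\E X_i^+ - \E X_i^- = \E X_i = \mu_i \geq 0$, so $\E X_i^- \leq \E X_i^+$. Second, the pointwise inequality $X_i^+ = \max(0, X_i) \leq \max(0, X_1, \ldots, X_n) = M^+$ yields $\sum_i \E X_i^+ \leq n\,\E M^+$. Chaining these bounds:
\begin{equation*}
\E M^- \;\leq\; \frac{1}{n}\sum_i 2^{-(n-1)}\E X_i^- \;\leq\; \frac{2^{1-n}}{n}\sum_i \E X_i^+ \;\leq\; 2^{1-n}\,\E M^+,
\end{equation*}
which is exactly what I need.

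The main obstacle, conceptually, is finding the right way to relate $M^-$ to $M^+$; once one notices that the support of $M^-$ is the low-probability event $A$ (with $\Pr[A] \leq 2^{-n}$) and that $|M| = \min_i |X_i|$ on $A$, the independence structure and the non-negativity of the means do all the work, and no delicate Gaussian computation is needed. I would also double-check the edge case $n = 2$ to confirm the constant $2^{1-n}$ is tight enough (the argument gives $\tfrac{1}{2}\E M^+$ slack, matching the lemma).
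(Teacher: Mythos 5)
Your proof is correct, and it takes a genuinely different route from the paper's. The paper symmetrizes each variable as $X_i=\mu_i+Y_iZ_i$ with half-normal magnitudes $Y_i$ and independent uniform signs $Z_i\in\{-1,1\}$, conditions on $Y$, and compares $\E_Z\max_i X_i$ with $\E_Z\max(0,\max_i X_i)$: the two differ only on the all-minus sign pattern (probability $2^{-n}$), where the deficit is at most $\mu_k+Y_k$ for $k=\argmax_i(\mu_i+Y_i)$, and this is absorbed by the $Z_k=+1$ branch of weight $\tfrac12$, giving the ratio $\tfrac{2^{n-1}}{2^{n-1}-1}$ for every realization of the magnitudes. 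You instead write $\E\max_i X_i=\E M^+-\E M^-$ and bound the negative part directly: $M^-$ is supported on the all-negative event $A$, is at most the average of the $|X_i|$ there, and the factorization $\E[|X_i|\mathbf 1_A]=\E[|X_i|\mathbf 1\{X_i<0\}]\prod_{j\ne i}\Pr[X_j<0]$ together with $\Pr[X_j<0]\le\tfrac12$ and $\E X_i^-\le\E X_i^+\le\E M^+$ gives $\E M^-\le 2^{1-n}\E M^+$, exactly the paper's constant; all steps check out, including degenerate (zero-variance) coordinates. Your argument is more elementary and slightly more general, since it never uses Gaussianity---only independence plus nonnegative means and medians---whereas the paper's sign-flip coupling proves the inequality conditionally on the magnitudes $|X_i-\mu_i|$ (a marginally stronger pointwise statement) and reuses a construction that also drives its proof of Lemma~\ref{lem:var2approx}.
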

\begin{proof}
We consider the following way to generate each $X_i\sim\N(0,\sigma_i)$. Firstly, a positive value $Y_i$ is generated from a half-normal distribution with variance $\sigma_i^2$. In other words, $Y_i$ is the absolute value of a normal variable with mean 0 and variance $\sigma_i^2$. Next, a sign $Z_i$ is generated uniformly from $\{-1,1\}$, and $X_i$ is determined by $\mu_i+Y_iZ_i$. For any fixed $Y_1,\cdots,Y_n$ we show that
\begin{eqnarray*}
\E_Z\max(\mu_1+Y_1Z_1,\cdots,\mu_n+Y_nZ_n)\geq \E_Z\max(0,\mu_1+Y_1Z_1,\cdots,\mu_n+Y_nZ_n).
\end{eqnarray*}
Suppose that $k=\arg\max_i (\mu_i+Y_i)$. Notice that when $Z_k=1$, which happens with probability $\frac{1}{2}$, $\max (\mu_i+Y_iZ_i)=\max(0,\max (\mu_i+Y_iZ_i))=\mu_k+Y_k$. When $Z_1=Z_2=\cdots=Z_n=-1$, which happens with probability $\frac{1}{2^n}$, let $t=\max (\mu_i+Y_iZ_i)$, then $t\geq -\mu_k-Y_k$. In other cases, $\max (\mu_i+Y_iZ_i)=\max(0,\max(\mu_i+Y_iZ_i))\geq 0$, and let $W\geq 0$ be the conditional expectation of $\max(\mu_i+Y_iZ_i)$ in this case. Then we have
$$\E\max (\mu_i+Y_iZ_i)=\frac{1}{2}(\mu_k+Y_k)+\frac{1}{2^n}t+\left(\frac{1}{2}-\frac{1}{2^n}\right)W,$$
and
$$\E\max(0,\max (\mu_i+Y_iZ_i)) =\frac{1}{2}(\mu_k+Y_k)+\frac{1}{2^n}\cdot\max(0,t)+\left(\frac{1}{2}-\frac{1}{2^n}\right)W.$$
As $t\geq -\mu_k-Y_k$, 
\[\frac{\E\max(0,\max (\mu_i+Y_iZ_i))}{\E\max (\mu_i+Y_iZ_i)}\leq \frac{\frac{1}{2}}{\frac{1}{2}-\frac{1}{2^n}}=\frac{2^{n-1}}{2^{n-1}-1}\] for $n\geq 2$.
Then the original lemma holds by taking the expectation of the above inequality over all possible $Y$.
\end{proof}

\varapprox*

\begin{proof}[Proof of Lemma~\ref{lem:var2approx}]
For each $i\in[n]$, we couple the generation of $X_i$ and $X'_i$ as follows. Firstly, a positive value $Y_i$ is generated from a half-normal distribution with variance $\sigma_i^2$. Next, a sign $Z_i$ is generated uniformly from $\{-1,1\}$. Then let $X_i=Y_iZ_i$, and $X'_i=\alpha_iY_iZ_i$ where $\alpha_i=\frac{\sigma_i'}{\sigma_i}$. Denote $Y'_i=\alpha_iY_i$. As $X'_i$ has the same distribution as $\alpha_iX_i$, the above procedure correctly generates $X_i\sim\N(0,\sigma_i^2)$ and $X'_i\sim\N(0,\sigma_i^2)$.

For fixed $Y=(Y_1,\cdots,Y_n)$, we study the relationship between $\E\max X'_i$ and $\E\max X'_i$. Let $Y^{(k)}$ denote the $k$th largest element in $Y$. Notice that when the largest element in $Y$ has sign $Z_i=1$, $\max_iX_i=Y^{(1)}$, and this happens with probability $\frac{1}{2}$. When $Y^{(1)}$ has sign $-1$ and $Y^{(2)}$ has sign $+1$, $\max_iX_i=Y^{(2)}$ and this happens with probability $\frac{1}{4}$. Similarly, for every $k\leq n$, $\max_iX_i=Y^{(k)}$ with probability $2^{-k}$. When all $Z_i$ are $-1$, $\max_iX_i=-Y^{(n)}$, and this happens with probability $2^{-n}$. Thus
$$\E\max_iX_i=\sum_{k=1}^{n}2^{-k}Y^{(k)}+2^{-n}(-Y^{(n)})=\sum_{k=1}^{n-1}2^{-k}Y^{(k)}.$$
Similarly, 
$$\E\max_iX'_i=\sum_{k=1}^{n-1}2^{-k}Y'^{(k)}.$$ 
For $Y=(Y_1,\cdots,Y_n)$ and $Y'=(\alpha_1Y_1,\alpha_2Y_2,\cdots,\alpha_nY_n)$ with $1\leq \alpha_i\leq 2$ for every $i\in[n]$, $Y^{(k)}\leq Y'^{(k)}\leq 2Y^{(k)}$ for every $k\in[n]$. Thus for any fixed $Y$,
$$\E\max_iX_i=\sum_{k=1}^{n-1}2^{-k}Y^{(k)}\leq \E\max_iX'_i=\sum_{k=1}^{n-1}2^{-k}Y'^{(k)}\leq 2\E\max_iX_i.$$
The original theorem holds by taking the expectation of the above inequality over all possible $Y$.
\end{proof}

\submodularset*
\begin{proof}[Proof of Lemma~\ref{lem:submodularset}]
It suffices to prove the lemma for $\sigma_0=1$, as the expected maximum with $X_i\sim\N(0,\sigma_0^{2})$ is just $\sigma_0$ fraction of the expected maximum with $X_i\sim\N(0,1)$.

For any $k\in[n]$, let $g(k)=\E\max(0,X_1,X_2,\cdots,X_k)$. Then for any $S$ with $|S|=k$, $f(S)=g(k)$, except when $|S|=n$, $g(n)=\E\max(0,X_1,X_2,\cdots,X_n)>\E\max(X_1,X_2,\cdots,X_n)=f(S)$. Thus to prove $f(S)$ is submodular, we only need to prove the submodularity of $g$, or equivalently for any $k\geq 1$, $g(k+1)-g(k)<g(k)-g(k-1)$.

Let $\Phi$ be the CDF of the standard normal distribution $\N(0,1)$. Then $\max(X_1,\cdots,X_k)$ has CDF $\Phi^{k}$, and
\begin{equation*}
    g(k)=\E\max(0,X_1,X_2,\cdots,X_k)=\int_{0}^{\infty}(1-\Phi^k(x))dx.
\end{equation*}
Then for any $k$,
\begin{eqnarray*}
    g(k)-g(k-1)&=&\int_{0}^{\infty}(1-\Phi^k(x))dx-\int_{0}^{\infty}(1-\Phi^{k-1}(x))dx\\
    &=&\int_{0}^{\infty}\Phi^{k-1}(x)(1-\Phi(x))dx\\
    &\geq&\int_{0}^{\infty}\Phi^{k}(x)(1-\Phi(x))dx\\
    &=&\int_{0}^{\infty}(1-\Phi^{k+1}(x))dx-\int_{0}^{\infty}(1-\Phi^{k}(x))dx\\
    &=&g(k+1)-g(k).
\end{eqnarray*}
Thus $g$ is a submodular function, which implies that $f$ is a submodular set function.

\end{proof}

\begin{lemma}\label{lem:max3order}
For any $a,b,c\in\R$,
\[\max(a,b)+\max(a,c)\geq\max(a,b,c)+a.\]
\end{lemma}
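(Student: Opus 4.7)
The plan is to reduce the inequality to an obvious statement about non-negative numbers by pulling out a factor of $a$ from both sides. Write $x^+ := \max(x,0)$. The key identity I would use is $\max(a,y) = a + (y-a)^+$ for any real $y$. Applying this on both sides converts the claim to
\begin{equation*}
2a + (b-a)^+ + (c-a)^+ \;\geq\; 2a + \max\bigl(0,\,b-a,\,c-a\bigr),
\end{equation*}
which, after canceling $2a$, is equivalent to showing $(b-a)^+ + (c-a)^+ \geq \max\bigl((b-a)^+,\,(c-a)^+\bigr)$. This last inequality is trivially true because the two summands on the left are non-negative and the right-hand side equals one of them.

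Alternatively, if one prefers a purely case-based argument, I would do a three-way split according to which of $a,b,c$ achieves $\max(a,b,c)$. If $a$ achieves the max, both sides equal $2a$. If $b$ achieves it, then $\max(a,b) = b$ and $\max(a,c) \geq a$, so the left is at least $b + a = \max(a,b,c) + a$; the case where $c$ achieves the max is symmetric.

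There is no real obstacle here; the only mild subtlety is recognizing the $\max(a,y) = a + (y-a)^+$ rewriting, which linearizes the problem and trivializes the inequality. I would present the one-line proof via this rewriting, since it is cleaner than a three-case analysis.
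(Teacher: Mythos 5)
Your proof is correct; the case-based alternative you sketch is essentially identical to the paper's own three-case argument, and your primary rewriting via $\max(a,y)=a+(y-a)^+$ is just a compact repackaging of the same elementary fact. Either version suffices, so there is nothing to fix.
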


\begin{proof}[Proof of Lemma~\ref{lem:max3order}]
As $b$ and $c$ are symmetric in the desired inequality, without loss of generality assume $b\geq c$. Discuss the order of the variables as follows.
\begin{itemize}
\item When $a\geq b\geq c$: $LHS=a+a\geq a+a=RHS$.
\item When $b\geq a\geq c$: $LHS=b+a\geq b+a=RHS$.
\item When $b\geq c\geq a$: $LHS=b+c\geq b+a=RHS$.
\end{itemize}
Thus the lemma holds for all possible orderings of $a,b,c$.
\end{proof}

\begin{lemma}\label{lem:submodular-4var-symmetric}
For any $a,b,c,d\in \R$, 
\begin{eqnarray*}
& &3\max(a,b,c,d)+\max(a,d)+\max(b,d)+\max(c,d)\\
&\leq& 2\max(a,b,d)+2\max(a,c,d)+2\max(b,c,d). 
\end{eqnarray*}
\end{lemma}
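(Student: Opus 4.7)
The plan is to reduce the four-variable inequality to three applications of the three-variable submodularity statement in Lemma~\ref{lem:max3order}, $\max(u,y)+\max(u,z)\geq\max(u,y,z)+u$, and then sum. The key observation is that each of the three $3$-element maxima on the right-hand side (RHS) contains $d$ together with exactly two of $a,b,c$; by pairing them up suitably and absorbing $d$ into one of the other variables, each pair matches the shape of Lemma~\ref{lem:max3order}.

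Concretely, I would establish the following three intermediate inequalities:
\begin{align*}
\max(a,b,d)+\max(a,c,d) &\geq \max(a,b,c,d)+\max(a,d),\\
\max(a,b,d)+\max(b,c,d) &\geq \max(a,b,c,d)+\max(b,d),\\
\max(a,c,d)+\max(b,c,d) &\geq \max(a,b,c,d)+\max(c,d).
\end{align*}
Each follows from Lemma~\ref{lem:max3order} with a simple change of variables: for the first inequality, set $u:=\max(a,d)$, $y:=b$, $z:=c$, so that $\max(a,b,d)=\max(u,y)$, $\max(a,c,d)=\max(u,z)$, and $\max(a,b,c,d)=\max(u,y,z)$; the inequality becomes $\max(u,y)+\max(u,z)\geq\max(u,y,z)+u$, which is exactly Lemma~\ref{lem:max3order}. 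The second and third inequalities are derived identically by setting $u:=\max(b,d)$ and $u:=\max(c,d)$, respectively.

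Summing the three inequalities, the left-hand side becomes $2\max(a,b,d)+2\max(a,c,d)+2\max(b,c,d)$, while the right-hand side becomes $3\max(a,b,c,d)+\max(a,d)+\max(b,d)+\max(c,d)$, which yields the claim. The main (minor) obstacle is just finding the right substitution into Lemma~\ref{lem:max3order}; the symmetric structure in $a,b,c$ and the distinguished role of $d$ suggest exactly the three pairings above, after which the proof is a one-line summation with no case analysis needed.
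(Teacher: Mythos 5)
Your proof is correct and matches the paper's own argument exactly: the paper also derives the same three intermediate inequalities by applying Lemma~\ref{lem:max3order} with $u=\max(a,d)$, $u=\max(b,d)$, $u=\max(c,d)$ respectively (paired with the remaining two of $a,b,c$) and then sums. Nothing to add.
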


\begin{proof}[Proof of Lemma~\ref{lem:submodular-4var-symmetric}]
By applying Lemma~\ref{lem:max3order} to $\max(a,d)$, $b$ and $c$, we get
\begin{equation*}
\max(a,b,d)+\max(a,c,d)\geq\max(a,b,c,d)+\max(a,d).
\end{equation*}
In the same way we can get
\begin{equation*}
\max(a,b,d)+\max(b,c,d)\geq\max(a,b,c,d)+\max(b,d),
\end{equation*}
\begin{equation*}
\max(a,c,d)+\max(b,c,d)\geq\max(a,b,c,d)+\max(c,d).
\end{equation*}
The lemma holds by adding the above three inequalities.
\end{proof}

\thmpersetsubmodular*

\begin{proof}[Proof of Theorem~\ref{thm:per-set-submodular}]
For any variance allocation $\boldsigma$, we prove that $f_{\boldsigma}(k)=\frac{1}{\binom{n}{k}}\obj_{I_k}(\boldsigma)$ is a concave function. Then $f=\max_{\boldsigma}f_{\boldsigma}$ is also concave, as it's the maximum over a set of concave functions.

For any fixed $\boldsigma$, for simplicity let $g=f_{\boldsigma}$. To prove the concavity of $g$, we prove that for any $k\geq 3$, 
\begin{equation}\label{eqn:g-submodular}
g(k)+g(k-2)\leq 2g(k-1).
\end{equation}
Notice that $g(k)$ is the per-set contribution to the objective. We can get an equivalent definition: randomly sample $k$ elements $(i_1,\cdots,i_k)$ from $[n]$ without replacement, then 
\begin{equation*}
g(k)=\E_{(X_1,\cdots,X_n)}\E_{(i_1,\cdots,i_k)}\max(X_{i_1},X_{i_2},X_{i_3},X_{i_4}\cdots,X_{i_k}).
\end{equation*}
Such a sampling can be viewed as first sample $k-3$ elements $(i_4,\cdots,i_k)$ without replacement, then sample 3 more elements $(i_1,i_2,i_3)$ without replacement. $g(k-1)$ can be viewed as first sample $k-3$ elements $(i_4,\cdots,i_k)$ without replacement, then sample 3 more elements $(i_1,i_2,i_3)$ without replacement, then choose two variables from $X_{i_1}$, $X_{i_2}$ and $X_{i_3}$:
\begin{eqnarray*}
g(k-1)&=&\E_{(X_1,\cdots,X_n)}\E_{(i_1,\cdots,i_k)}\frac{1}{3}\bigg(\max(X_{i_1},X_{i_2},X_{i_4}\cdots,X_{i_k})\\
& &\ \ \ \ \ \ \ \ \ \ \ \ \ \ \ \ \ \ \ \ \ \ \ \ \ \ \ \ \ \ \ \ \ \ \ \ +\max(X_{i_1},X_{i_3},X_{i_4}\cdots,X_{i_k})\\
& &\ \ \ \ \ \ \ \ \ \ \ \ \ \ \ \ \ \ \ \ \ \ \ \ \ \ \ \ \ \ \ \ \ \ \ \ +\max(X_{i_2},X_{i_3},X_{i_4},\cdots,X_{i_k})\bigg).
\end{eqnarray*}
$g(k-2)$ can be viewed as first sample $k-3$ elements $(i_4,\cdots,i_k)$ without replacement, then sample 3 more elements $(i_1,i_2,i_3)$ without replacement, then choose one variable from $X_{i_1}$, $X_{i_2}$ and $X_{i_3}$:
\begin{eqnarray*}
g(k-2)&=&\E_{(X_1,\cdots,X_n)}\E_{(i_1,\cdots,i_k)}\frac{1}{3}\bigg(\max(X_{i_1},X_{i_4},\cdots,X_{i_k})\\
& &\ \ \ \ \ \ \ \ \ \ \ \ \ \ \ \ \ \ \ \ \ \ \ \ \ \ \ \ \ \ \ \ \ \ \ \ +\max(X_{i_2},X_{i_4},\cdots,X_{i_k})\\
& &\ \ \ \ \ \ \ \ \ \ \ \ \ \ \ \ \ \ \ \ \ \ \ \ \ \ \ \ \ \ \ \ \ \ \ \ +\max(X_{i_3},X_{i_4},\cdots,X_{i_k})\bigg).
\end{eqnarray*}
To prove that $g(k)+g(k-2)\leq 2g(k-1)$, it suffices to show that this inequality holds even for every realization of $(X_1,\cdots,X_n)$ and $(i_1,\cdots,i_k)$. After removing the expectation quantifier, we get 
\begin{equation*}
g(k)=\max(X_{i_1},X_{i_2},X_{i_3},X_{i_4},\cdots,X_{i_k}),
\end{equation*}
\begin{eqnarray*}
g(k-1)&=&\frac{1}{3}\bigg(\max(X_{i_1},X_{i_2},X_{i_4},\cdots,X_{i_k})+\max(X_{i_1},X_{i_3},X_{i_4},\cdots,X_{i_k})\\
& &\ \ \ \ \ \ \ +\max(X_{i_2},X_{i_3},X_{i_4},\cdots,X_{i_k})\bigg),
\end{eqnarray*}
\begin{eqnarray*}
g(k-2)&=&\frac{1}{3}\bigg(\max(X_{i_1},X_{i_4},\cdots,X_{i_k})+\max(X_{i_2},X_{i_4},\cdots,X_{i_k})\\
& &\ \ \ \ \ \ \ +\max(X_{i_3},X_{i_4},\cdots,X_{i_k})\bigg).
\end{eqnarray*}
By applying $a=X_{i_1}$, $b=X_{i_2}$, $c=X_{i_3}$ and $d=\max(X_{i_4},\cdots,X_{i_k})$, we exactly get $3g(k)+3g(k-2)\leq 6g(k-1)$. By taking the expectation over all possible realizations of $(X_1,\cdots,X_n)$ and $(i_1,\cdots,i_k)$ we have finished the proof of \eqref{eqn:g-submodular}, thus the concavity of the function.

\end{proof}

\lemcorrindgap*

\begin{proof}[Proof of Lemma~\ref{lem:corr-ind-gap}]
For any $t\geq 0$, we analyze $\Pr[\max_{i}X_i\geq t]$ and $\Pr[\max_iY_i\geq t]$. We first prove that 
\begin{equation}\label{eqn:corr-ind-per-t}
\Pr[\max_iY_i\geq t]\geq \frac{e-1}{e}\Pr[\max_{i}X_i\geq t].
\end{equation}

For any $i\in[n]$, let $q_i=\Pr[Y_i\geq t]$. Notice that the marginal distributions $X_i$ and $Y_i$ are exactly the same normal distributions with identical mean $\mu_i$ and variance $\Sigma_{ii}$, we have $\Pr[X_i\geq t]$ is also $q_i$. Then for correlated variables $X_1,\cdots,X_n$, $\Pr[\max_iX_i\geq t]\leq \sum_{i}q_i$; for independent variables $Y_1,\cdots,Y_n$, 
\begin{eqnarray*}
\Pr[\max_iY_i\geq t]=1-\prod_{i}(1-q_i)\geq 1-\prod_i e^{-q_i}=1-e^{-\sum_i q_i}.
\end{eqnarray*}
Let $q=\sum_i q_i$. When $q\geq 1$,
\begin{eqnarray*}
\frac{\Pr[\max_iY_i\geq t]}{\Pr[\max_iX_i\geq t]}\geq \Pr[\max_iY_i\geq t]\geq 1-e^{-q}\geq 1-e^{-1}.
\end{eqnarray*}
When $q<1$,
\begin{eqnarray*}
\frac{\Pr[\max_iY_i\geq t]}{\Pr[\max_iX_i\geq t]}\geq \frac{1-e^{-q}}{q}.
\end{eqnarray*}
Let $g(q)=\frac{1-e^{-q}}{q}$. As $g'(q)=\frac{e^{-q}q-1+e^{-q}}{q^2}<0$ for $0<q<1$, we have $g$ is a decreasing function on $(0,1)$. Thus $g(q)>g(1)=1-e^{-1}$ for $0<q<1$. This finishes the proof of \eqref{eqn:corr-ind-per-t} for every $t\geq 0$.

Now we are ready to prove the original lemma using \eqref{eqn:corr-ind-per-t}. Actually,
\begin{eqnarray*}
\E\max(X_1,\cdots,X_n)&\leq& \E\max(0,X_1,\cdots,X_n)\\
&=&\int_{0}^{\infty}\Pr[\max_iX_i\geq t]dt\\
&\leq&\int_{0}^{\infty}\frac{e}{e-1}\Pr[\max_i Y_i\geq t]dt\\
&=&\frac{e}{e-1}\E\max(0,Y_1,\cdots,Y_n)\\
&\leq&\frac{2e}{e-1}\E\max(Y_1,\cdots,Y_n).
\end{eqnarray*}
Here the last inequality is from Lemma~\ref{lem:max0} for $n\geq 2$.
This finishes the proof of the lemma.

\end{proof}

\end{document}